\documentclass{article}

\usepackage[preprint,nonatbib]{neurips_2026}
\usepackage[numbers,sort&compress]{natbib}

\usepackage[utf8]{inputenc}
\usepackage[T1]{fontenc}
\usepackage{graphicx}
\usepackage{booktabs}
\usepackage{amsmath,amssymb,amsfonts}
\usepackage{amsthm}
\usepackage{mathtools}
\usepackage{bm}
\usepackage[ruled,lined,linesnumbered]{algorithm2e}
\usepackage{enumitem}
\usepackage{multirow}
\usepackage{array}
\usepackage{float}
\usepackage{placeins}
\usepackage{pifont}
\usepackage[colorlinks,citecolor=blue,urlcolor=blue,linkcolor=blue,breaklinks=true]{hyperref}
\usepackage[capitalize,nameinlink]{cleveref}
\usepackage{xcolor}

\newtheorem{theorem}{Theorem}
\newtheorem{proposition}[theorem]{Proposition}
\newtheorem{lemma}[theorem]{Lemma}
\newtheorem{corollary}[theorem]{Corollary}
\newtheorem{definition}[theorem]{Definition}

\DeclareMathOperator{\SLERP}{SLERP}
\DeclareMathOperator{\Exp}{Exp}
\DeclareMathOperator{\Log}{Log}
\DeclareMathOperator{\Ric}{Ric}

\newcommand{\cS}{\mathcal{S}}

\newcommand{\cX}{\mathcal{X}}
\newcommand{\cL}{\mathcal{L}}

\newcommand{\cN}{\mathcal{N}}

\newcommand{\bZ}{\mathbf{Z}}

\newcommand{\dg}{d_g}
\newcommand{\smax}{\sigma_{\max}}
\DeclareMathOperator{\supp}{supp}

\begin{document}

\title{STREAM: Stochastic Riemannian Flow Matching\\with Anisotropic Decoder\\for Digital Histopathology Image Generation}

\author{%
  Won June Cho \quad Daeky Jeong \quad Hyeongyeol Lim \quad Hongjun Yoon \\
  DEEPNOID Inc. \\
  \texttt{\{wjcho, dkjeong, hylim, hyoon\}@deepnoid.com}
}

\maketitle

\raggedbottom

\begin{abstract}
Synthetic histopathology image generation addresses critical challenges in computational pathology, including patient privacy and the growing need for large-scale training data for foundation models. Latent diffusion models have dominated the image generation domain, with recent works emphasizing that the choice of latent space is critical to the quality of generated images. Existing \emph{state-of-the-art} generative models in histopathology use pretrained Vision Foundation Models (VFMs) as conditioning signals, and we observe that this leads to ``conditioning collapse'', where the conditioning signal dominates the latent space and lowers the quality and diversity of generated samples. Therefore, we instead use pretrained histopathology VFMs as the latent space itself, leveraging their patch-token features that encode rich semantic information. We empirically show that these features are $\ell_2$-normalized and lie on the unit hypersphere $\cS^{d-1}$ with strong angular dominance and intrinsic curvature, making them naturally suited for a Riemannian formulation. We therefore present \emph{STREAM}, the first framework to apply Riemannian flow matching in the pathology domain. STREAM consists of two stages: 1)~a bridge-type stochastic perturbation that establishes per-token rectifiability on $\cS^{d-1}$ for training a Diffusion Transformer (DiT) in latent space, and 2)~a novel anisotropic decoder that allocates robustness to low-energy directions of the velocity-field Jacobian while preserving fidelity along its high-energy directions. Together, STREAM achieves \emph{state-of-the-art} reconstruction and generation performance on breast and colorectal cancer datasets. The code will be publicly released upon acceptance.
\end{abstract}

\section{Introduction}
\label{sec:intro}

Synthetic medical image generation addresses critical challenges in computational histopathology: patient privacy concerns that limit data sharing across institutions~\citep{wang2025selfimproving} and the growing need for training data as large Vision Foundation Models (VFMs) and Vision-Language Models (VLMs) are applied to histopathology~\mbox{\citep{chen2024uni,zimmermann2024virchow2,lu2024conch,xu2024gigapath,xiang2025musk}}.

For high-resolution image generation, latent diffusion models~\citep{rombach2022high,ho2020denoising,peebles2023scalable} have dominated the space. A recent line of work redesigns the underlying tokenizer~\mbox{\citep{yao2025vavae,chen2025maetok,leng2025repae,zheng2025rae}}, as the choice of latent space is increasingly recognized as critical to downstream generation quality. The underlying observation across these works is that latents generated via Variational Autoencoders (VAEs) primarily emphasize low-level details~\citep{chen2026aligntok}, with linear classifiers on VAE features substantially underperforming those on representation-encoder features~\citep{zheng2025rae,chen2025maetok}. We confirm this pattern transfers to the histopathology setting: linear probing on downstream binary classification (\cref{tab:collapse}) shows that the VAE encoders used by current \emph{state-of-the-art} histopathology generative models ZoomLDM~\citep{graikos2024lrdm} and PixCell~\citep{yellapragada2025pixcell} yield substantially lower performance than histopathology VFMs~\mbox{\citep{chen2024uni,zimmermann2024virchow2,xu2024gigapath}}. Lacking semantic content in their latent spaces, both methods compensate by conditioning diffusion on the [CLS]-token of a pretrained histopathology VFM, which we show empirically (\cref{sec:conditioning}) leads to ``conditioning collapse'': the conditioning signal dominates output diversity and any \emph{de novo} synthesis requires a VFM at inference---impractical for clinical deployment. Moreover, digital histopathology particularly benefits from unconditional generative approaches due to the high cost of expert pathologist annotation~\citep{campanella2019clinical} and the combinatorial diversity of tissue morphology~\citep{litjens2017survey}. Therefore, using histopathology VFM patch-token features directly as the generative latent space resolves both issues, but requires accounting for their geometry: the features are naturally $\ell_2$-normalized on the unit hypersphere $\cS^{d-1}$~\citep{kumar2026manifold,chang2026dinosae}, with strong angular dominance (\cref{sec:angular}) and substantial intrinsic curvature (\cref{sec:intrinsic_geometry}), motivating Riemannian flow matching (RFM). STREAM is the first framework to apply RFM with VFM encoders in the histopathology domain. Our main contributions are:
\begin{enumerate}[leftmargin=*,nosep]
\item \textbf{Motivating RFM in the histopathology domain} (\cref{sec:motivation}): we identify conditioning collapse in current \emph{state-of-the-art} histopathology models and provide empirical evidence on the spherical geometry of histopathology VFM features that justifies a Riemannian formulation.

\item \textbf{Bridge-type stochastic RFM} (\cref{sec:bridge}): a tangent-Gaussian perturbation of the SLERP geodesic on $\cS^{d-1}$ with a Brownian-bridge schedule that vanishes at the endpoints, providing full support of the marginal law for all $t \in (0,1)$ and per-token rectifiability of the bridge construction at the population level (\cref{thm:rectifiability}).

\item \textbf{Anisotropic decoder training} (\cref{sec:aniso}): a novel decoder design whose noise covariance is shaped by the singular value decomposition (SVD) of the trained DiT's velocity-field Jacobian --- \emph{small} noise along high-energy directions (those to which the velocity field is most sensitive) to preserve reconstruction fidelity, \emph{large} noise along low-energy directions to absorb the residual drift the generator may exhibit at inference.
\end{enumerate}

\section{Related Work}
\label{sec:related}

\paragraph{Diffusion models for histopathology generation.}
Diffusion-based generative modeling for histopathology spans large-image and multi-scale synthesis~\citep{le2024infbrush,graikos2024lrdm,yellapragada2025pixcell}. Other works incorporate text or mask conditioning for guided synthesis like pathology reports~\citep{yellapragada2024pathldm}, cell topology~\citep{xu2025topocellgen}, and joint nuclei image--label co-synthesis~\citep{min2024cosynthesis}. While digital histopathology routinely leverages pretrained VFMs for downstream tasks (classification, segmentation, retrieval), no current \emph{state-of-the-art} histopathology generation model exploits these representations as the generative latent space. STREAM, however, intends to focus instead on purely unconditional histopathology generation: a diffusion model that faithfully learns the histopathology data distribution from a VFM-encoded latent space without external conditioning. On the other hand, STREAM can also form the unconditional basis for a future conditional generative model in histopathology, as the high-level semantics of the latent space can potentially benefit conditional generation as well.

\paragraph{Latent space modification in diffusion model training.}
VFMs have emerged as feature extractors for generative models via two paradigms. \emph{Paradigm~1} uses VFM features as training losses: REPA~\citep{yu2025repa} aligns DiT features with DINOv2~\citep{oquab2024dinov2}, REPA-E~\citep{leng2025repae} extends to end-to-end VAE tuning, and others similarly leverage VFM representations~\citep{shi2026rectok,yao2025vavae,wang2025reg}. \emph{Paradigm~2} uses VFM features as the latent space: RAE~\citep{zheng2025rae} trains diffusion transformers directly on representation-encoder features with noise-augmented decoders, SVG~\citep{shi2025svg} uses frozen DINOv3~\citep{simeoni2025dinov3} features, and related works explore quantizing representation encoders into discrete tokens for autoregressive generation~\citep{zheng2025vfmtok,zhu2024digit} or fine-tuning them under a semantic-preservation loss (UniLIP~\citep{tang2025unilip}; AlignTok~\citep{chen2026aligntok}). Decoder regularization via masking or denoising losses has been shown beneficial for jointly-trained tokenizers~\citep{yang2025ldetok,qiu2025robustok,kouzelis2025eqvae}, but \citet{yang2025ldetok} explicitly note that gains depend on joint encoder--decoder training. We adopt Paradigm~2 because histopathology-specific VFMs~\citep{chen2024uni,zimmermann2024virchow2,xu2024gigapath} produce $\ell_2$-normalized features naturally suited to Riemannian geometry. STREAM differs from approaches that jointly adapt the tokenizer encoder--decoder to improve reconstruction fidelity (AlignTok~\citep{chen2026aligntok}, REPA-E~\citep{leng2025repae}) by keeping the histopathology VFM frozen and instead recovering diffusion-friendliness via geometry: a stochastic bridge on $\cS^{d-1}$ paired with anisotropic decoder regularization (\cref{sec:aniso}).

\paragraph{Flow matching and Riemannian extensions.}
Flow matching (FM)~\citep{lipman2023flow,albergo2023stochastic,liu2023flow} provides a simulation-free alternative for generative modeling, scaled to high-resolution synthesis~\citep{esser2024scaling,ma2024sit}. Rectified flow~\citep{liu2022rectified,liu2023flow} reframes generation as straight transport between marginals; \citet{hertrich2025rectified} delineate the (restrictive) Euclidean conditions under which rectification yields optimal transport, providing the smooth-positive-density sufficient condition that motivates our bridge construction. On manifolds, \citet{chen2024riemannian} introduced SLERP conditional flows on $\cS^{d-1}$; RJF~\citep{kumar2026manifold} added Jacobi-field reweighting and demonstrated that Riemannian FM on $\ell_2$-normalized VFM embeddings outperforms Euclidean alternatives. Only two prior works apply Riemannian flow matching specifically to image generation: Geometry-Aware Image FM~\citep{lee2026geometry} introduces SFM with vanilla SLERP geodesic paths, and DINO-SAE~\citep{chang2026dinosae} uses Chen--Lipman SLERP RFM on a patch-wise product of spheres with isotropic Euclidean noise augmentation borrowed from RAE~\citep{zheng2025rae}. Both adopt the deterministic SLERP conditional path without perturbation. STREAM's design differs in three aspects: (i)~replacing the deterministic SLERP conditional path with a bridge-perturbation loss, (ii)~exploiting the Riemannian formulation for anisotropic decoder training, and (iii)~jointly addressing rectifiability and decoder--generator interaction in the RFM setting. Direct numerical comparison with these prior works was not possible as neither has been open-sourced.

\section{Why RFM for Histopathology?}
\label{sec:motivation}

\subsection{Conditioning Collapse and Angular Dominance in VFM Feature Spaces}
\label{sec:conditioning}

\emph{State-of-the-art} histopathology generative models ZoomLDM~\citep{graikos2024lrdm} and PixCell~\citep{yellapragada2025pixcell} condition diffusion models on VFM embeddings $c = E_{\text{VFM}}(I)$. We formalize a \emph{conditioning dominance ratio} via the entropy decomposition $H(X) = H(X \mid C) + I(X; C)$:
\begin{equation}
  \rho_{\text{cond}} = \frac{I(X; C)}{H(X)} = 1 - \frac{H(X \mid C)}{H(X)},
  \label{eq:rho_cond}
\end{equation}
estimated via the Vendi Score~\citep{friedman2023vendi}, Conditional Vendi Score~\citep{jalali2024cvs}, and SPEC-diff~\citep{jalali2025specdiff} (\cref{app:conditioning}). Both models exhibit conditioning collapse (\cref{tab:collapse}): 62--76\% of output diversity is attributable to the conditioning signal rather than the learned latent space. This is compounded by VAE-extracted latents being weakly structured semantically~\citep{chen2026aligntok,zheng2025rae,chen2025maetok}: linear-probe AUROC on SPIDER-breast~\citep{nechaev2025spider} for malignant vs.\ benign tissue is $\geq 0.937$ for VFMs vs.\ $\leq 0.640$ for VAEs (\cref{tab:perturbation}; full protocol in \cref{app:conditioning}). This motivates using VFM features directly as the generative latent space.

\begin{table}[t]
  \begin{minipage}[t]{0.40\linewidth}
    \centering
    \caption{Conditioning collapse on TCGA-BRCA (\cref{eq:rho_cond}).}
    \label{tab:collapse}
    \scriptsize
    \setlength{\tabcolsep}{2pt}
    \begin{tabular}{@{}l ccccc@{}}
      \toprule
      Model & $H(X|C)$ & $H(X)$ & $\rho_\text{cond}$ & CVS & SPEC-diff \\
      \midrule
      ZoomLDM & 1.28 & 5.17 & 76\% & 3.59 & 0.067 \\
      PixCell & 1.81 & 4.78 & 62\% & 6.08 & 0.102 \\
      \bottomrule
    \end{tabular}
  \end{minipage}\hspace{0.03\linewidth}%
  \begin{minipage}[t]{0.56\linewidth}
    \centering
    \caption{Linear probing on SPIDER-breast (benign vs.\ malignant).}
    \label{tab:perturbation}
    \scriptsize
    \setlength{\tabcolsep}{2pt}
    \begin{tabular}{@{}l c c ccc@{}}
      \toprule
      Model & $d$ & AUROC & $\text{Acc}_\text{dir}$ & $\text{Acc}_\text{mag}$ & $f_\text{angular}$ \\
      \midrule
      UNI v1                    & 1024 & 0.995          & 0.993          & 0.051 & 0.999 \\
      \textbf{UNI2-h}           & 1536 & \textbf{0.997} & \textbf{0.995} & 0.178 & 0.973 \\
      DINOv3-L                  & 1024 & 0.937          & 0.930          & 0.008 & 0.999 \\
      \midrule
      PixCell VAE               & 16   & 0.640          & 0.832          & 0.056 & 0.994 \\
      ZoomLDM VAE               & 3    & 0.624          & 0.609          & 0.313 & 0.813 \\
      \bottomrule
    \end{tabular}
  \end{minipage}
\end{table}

\label{sec:angular}%
We next ask \emph{where} in VFM features the semantic information resides via 1-NN angular vs.\ radial perturbation on SPIDER-breast (full protocol in \cref{app:conditioning}). For both pathology VFMs (UNI, UNI2-h) and the natural-image-domain DINOv3-L, $f_\text{angular} > 0.97$ (\cref{tab:perturbation}): class-discriminative information lives in the angular structure of $\cS^{d-1}$, directly motivating RFM.

\subsection{Intrinsic Geometry of VFM Feature Manifolds}
\label{sec:intrinsic_geometry}

Following \citet{xiong2026exploiting}, we analyze the intrinsic geometry of histopathology VFM features on TCGA-BRCA for UNI ($d{=}1024$) and UNI2-h ($d{=}1536$). Spectral analysis yields high effective rank ($R_\mathrm{eff} = 252$ for UNI, $265$ for UNI2-h), and tangent drift exceeds $0.72$ at hop~1 (\cref{fig:intrinsic_geometry}; full methodology and hop-saturation analysis in \cref{app:intrinsic_geometry}) --- direct evidence of manifold curvature. Hop-1 drift this large rules out Euclidean flow matching; combined with $\ell_2$-normalization onto $\cS^{d-1}$, this justifies RFM on the hypersphere. We use UNI ($d{=}1024$) henceforth.

\section{Method: STREAM}
\label{sec:method}

\begin{figure}[t]
  \centering
  \includegraphics[width=0.88\linewidth]{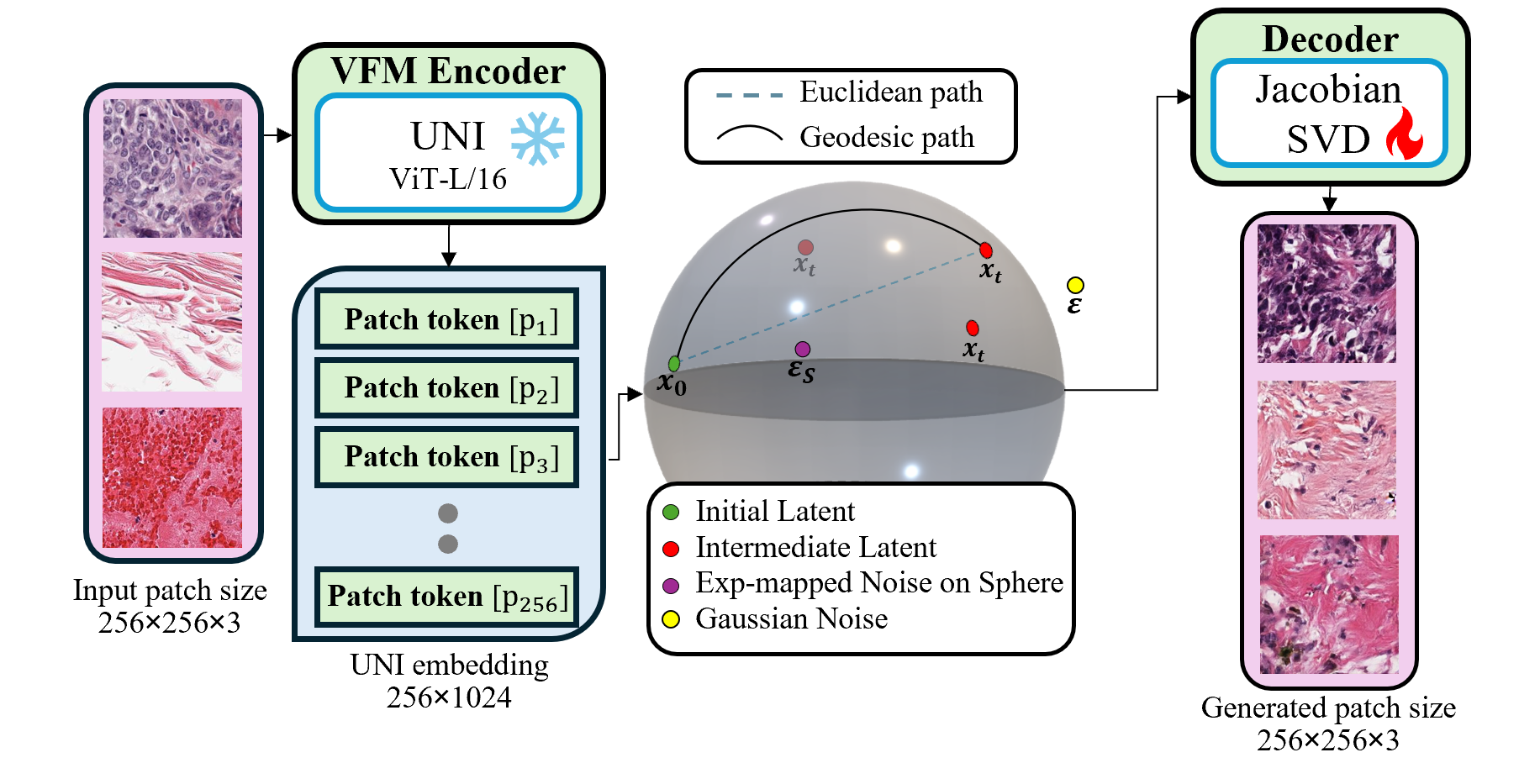}
  \caption{\textbf{Overview of STREAM.} A frozen pathology VFM encoder maps each input patch to $N{=}256$ tokens on the unit hypersphere $\cS^{d-1}$; a Diffusion Transformer learns to transport a uniform source distribution on $(\cS^{d-1})^N$ to the data distribution along bridge-perturbed geodesics. A separately-trained anisotropic decoder reconstructs histopathology images from generated features, with directional noise injection guided by the SVD of the trained DiT's velocity-field Jacobian.}
  \label{fig:overview}
\end{figure}

Flow matching~\citep{lipman2023flow} learns a time-dependent velocity field $v_\theta(x, t)$ generating a probability path $p_t$ from a source $p_0$ to data $p_1$. The intractable marginal FM loss shares its gradient with the tractable conditional FM loss $\cL_\text{CFM} = \mathbb{E}_{t,z,X_t|z}[\|v_\theta - u_t\|^2]$, where $u_t(x \mid z)$ is the conditional velocity along a known path. On the unit hypersphere $\cS^{d-1} = \{x \in \mathbb{R}^d : \|x\| = 1\}$, the Riemannian structure replaces linear operations: the geodesic distance $\dg$, exponential map $\Exp_x$, and logarithmic map $\Log_x$ are standard (\cref{app:prelim}). The conditional path becomes a geodesic~\citep{chen2024riemannian}---the spherical linear interpolation (SLERP) with constant speed $\|\dot{\mu}_t\|_g = \Omega$:
\begin{equation}
  \SLERP(x_0, x_1, t) = \frac{\sin((1-t)\Omega)}{\sin\Omega}\,x_0 + \frac{\sin(t\Omega)}{\sin\Omega}\,x_1, \quad \Omega = \dg(x_0, x_1),
  \label{eq:slerp}
\end{equation}
with conditional velocity $u_t(x \mid x_1) = \Log_x(x_1)/(1-t)$. Background on optimal transport, the curvature--dimension condition, and rectifiability theory used in our proofs is collected in \cref{app:prelim}.

\subsection{Problem Setup}
\label{sec:setup}

Our pipeline (\cref{fig:overview}) adopts the RAE framework~\citep{zheng2025rae}:
\begin{enumerate}[nosep,leftmargin=*]
\item \textbf{Encoder} $E: \cX \to (\cS^{d-1})^N$: A frozen VFM (UNI~\citep{chen2024uni}, ViT-L/16~\citep{dosovitskiy2021image}) mapping 256$\times$256 histopathology patches to $N = (256/16)^2 = 256$ $\ell_2$-normalized patch tokens of dimension $d = 1024$.
\item \textbf{Generator} $G_\theta$: A Diffusion Transformer (DiT)~\citep{peebles2023scalable} learning to generate features $\hat{\bZ} \sim p_\theta$ approximating $p_\text{data} = E_\#(p_\text{real})$.
\item \textbf{Decoder} $D_\phi: \mathbb{R}^{Nd} \to \cX$: A ViT-based decoder reconstructing images from features.
\end{enumerate}

The three components interact as follows. The encoder $E$ produces clean VFM features $\bZ_1 = E(I) \in (\cS^{d-1})^N$ for each training image $I$. The generator $G_\theta$ is trained to recover $\bZ_1$ from a bridge-perturbed interpolant on $(\cS^{d-1})^N$: a uniform source $\bZ_0 \sim p_0$ is interpolated toward $\bZ_1$ along the SLERP geodesic to produce $\mu_t$, then perturbed by tangent-Gaussian noise $X_t = \Exp_{\mu_t}(\sigma(t)\epsilon)$ with the bridge schedule $\sigma(t) = \smax\sin(\pi t)$ peaking at $t{=}0.5$ and vanishing at the endpoints (full mechanics in \cref{sec:bridge}). The decoder $D_\phi$ is trained \emph{separately, after $G_\theta$}, on clean VFM features perturbed by anisotropic noise whose covariance is shaped by the SVD of the trained DiT's velocity-field Jacobian (\cref{sec:aniso}); this couples the decoder's directional robustness budget to the very directions in which the DiT's velocity field is most or least sensitive. At generation time, $\bZ_0 \sim p_0$ is integrated through $G_\theta$ via Riemannian Euler steps on $(\cS^{d-1})^N$ to produce $\hat{\bZ}_1 \sim p_\theta$, and $D_\phi(\hat{\bZ}_1)$ outputs the histopathology image; the bridge perturbation operates only at training time.

The generative model operates on the product manifold $(\cS^{d-1})^N$ with the product metric $g = \bigoplus_{n=1}^N g_n$, total dimension $N(d-1) = 261{,}888$. The source distribution is $p_0 = \text{Uniform}((\cS^{d-1})^N)$---the unique stationary measure of Brownian motion on $(\cS^{d-1})^N$, geometrically canonical by the Bakry--\'Emery curvature-dimension condition $\text{CD}(d-2, \infty)$~\citep{bakry1985diffusions}. Geodesics, exponential maps, and all constructions decompose as per-token operations on the product manifold; we present the theory for a single copy of $\cS^{d-1}$ throughout. Parts~(i)--(ii) of \cref{thm:rectifiability} lift per-token to $(\cS^{d-1})^N$ without further assumption; the $O(\smax^2)$ transport bound (\cref{prop:approx_transport}) additionally requires a cross-token regularity assumption surfaced in \cref{sec:bridge}.

\subsection{Stochastic Bridge RFM}
\label{sec:bridge}

Standard RFM~\citep{chen2024riemannian,kumar2026manifold} on $\cS^{d-1}$ trains a velocity field $v_\theta(x_t, t)$ via the conditional flow matching loss along deterministic SLERP paths,
\begin{equation}
  \cL_\text{RFM}(\theta) = \mathbb{E}_{t \sim U[0,1],\,(x_0, x_1) \sim \pi}\!\left[\bigl\|v_\theta(x_t, t) - u_t(x_t \mid x_1)\bigr\|_g^2\right],
  \label{eq:rfm_loss}
\end{equation}
with $x_t = \SLERP(x_0, x_1, t)$ and $u_t(x_t \mid x_1) = \Log_{x_t}(x_1)/(1-t)$. Two limitations of \cref{eq:rfm_loss} motivate our bridge construction: (L1)~no general rectifiability guarantee~\citep{hertrich2025rectified}, addressed by \cref{thm:rectifiability}; and (L2)~the disconnected-intermediate-support obstruction~\citep[Prop.~10]{hertrich2025rectified} that arises when data clusters on $\cS^{d-1}$, resolved by \cref{cor:disconnected}.

\begin{definition}[Stochastic bridge conditional path]
\label{def:bridge}
Given the independent coupling $\pi = p_0 \otimes p_1$ on $\cS^{d-1} \times \cS^{d-1}$, $(x_0, x_1) \sim \pi$, and noise schedule $\sigma(t) = \smax \sin(\pi t)$:
\begin{equation}
  \mu_t = \SLERP(x_0, x_1, t), \qquad
  x_t = \Exp_{\mu_t}\!\bigl(\sigma(t)\,\epsilon\bigr), \quad
  \epsilon \sim \cN(0, \Pi_{\mu_t}),
  \label{eq:bridge}
\end{equation}
where $\Pi_{\mu_t} = I - \mu_t \mu_t^\top$ is the tangent-space projector. Throughout, $t{=}0$ corresponds to the source distribution $p_0$ (uniform noise) and $t{=}1$ to the data distribution $p_1$, following the flow-matching convention~\citep{lipman2023flow,liu2022rectified}.
\end{definition}

The stochastic noise schedule $\sigma(t) = \smax \sin(\pi t)$ is the $n=1$ Karhunen--Lo\`eve (KL) mode of the Brownian bridge --- the unique sinusoid strictly positive on $(0,1)$ with vanishing endpoints --- with $\smax \lesssim \pi/\sqrt{d-1}$ ensuring perturbations remain inside the injectivity radius of $\cS^{d-1}$ (full derivation and comparison with variance-exploding/variance-preserving (VE/VP) alternatives in \cref{app:noise_schedule}). Vanishing endpoints $\sigma(0)=\sigma(1)=0$ are critical: the bridge must not perturb the source ($t{=}0$) or data ($t{=}1$) endpoints, so $X_0 = x_0$ and $X_1 = x_1$ exactly --- the network sees clean source and data without conflating them with noise.

\paragraph{Training loss and pipeline.}
We train with a chordal (extrinsic Euclidean) loss, hereafter the \emph{chord loss}, replacing the tangent-space geodesic distance with the ambient $\ell_2$ distance between normalized predictions, using x-prediction~\citep{li2025backtobasics} with normalized output $\hat{x}_1 = f_\theta(x_t, t)/\|f_\theta(x_t, t)\|$:
\begin{equation}
  \cL_\text{chord}(\theta) = \mathbb{E}_{t,\,(x_0,x_1) \sim \pi,\,\epsilon}\!\left[\frac{\|\hat{x}_1 - x_1\|^2}{(1-t)^2}\right].
  \label{eq:vloss_chord}
\end{equation}
At the population level $\cL_\text{chord}$ is identical to the natural Riemannian (geodesic) loss --- both share the minimizer $\hat{x}_1 = x_1$ and the inference velocity $v_\theta(x_t, t) = \Log_{x_t}(\hat{x}_1)/(1-t)$ --- but avoids an $\arccos$-induced singularity near $t \to 1$ (\cref{app:x_prediction}). Concretely, $f_\theta$ is a LightningDiT-XL DiT trained to predict per-token data endpoints on $\cS^{d-1}$ from bridge-perturbed VFM embeddings (\cref{alg:training} in \cref{app:training_alg}). At inference, generation uses 25 midpoint Euler steps on $(\cS^{d-1})^N$ (\cref{alg:generation} in \cref{app:integration}), followed by anisotropic decoding (\cref{sec:aniso}).

\begin{proposition}[Full support]
\label{prop:full_support}
Under the stochastic bridge (\cref{def:bridge}) with $\smax > 0$, $\operatorname{law}(X_t)$ has full support on $\cS^{d-1}$ for all $t \in (0,1)$.
\end{proposition}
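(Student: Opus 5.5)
Fix $t\in(0,1)$, so $\sigma(t)=\smax\sin(\pi t)>0$. We will show that every nonempty open set $U\subseteq\cS^{d-1}$ has positive probability under $\operatorname{law}(X_t)$. We condition on the mean point $\mu_t=\mu$ and use the tower property,
\[
\Pr(X_t\in U)=\mathbb{E}_{\mu_t}\bigl[\Pr(X_t\in U\mid \mu_t)\bigr].
\]
It therefore suffices to show that $\Pr(X_t\in U\mid\mu_t=\mu)>0$ for every $\mu\in\cS^{d-1}$. In fact we only need this for $\mu$ in a set of positive $\operatorname{law}(\mu_t)$-measure, but proving it pointwise avoids any need to know $\operatorname{law}(\mu_t)$. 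This matters because $p_1$ may be singular or clustered, so $\mu_t$ need not have full support.

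\textbf{Conditional law.} Given $\mu_t=\mu$, the tangent vector $v:=\sigma(t)\epsilon$ is a centered Gaussian on $T_\mu\cS^{d-1}=\mu^\perp$ with covariance $\sigma(t)^2\Pi_\mu$. Since $\Pi_\mu$ restricted to $\mu^\perp$ is the identity, $v$ has a strictly positive density with respect to Lebesgue measure on the $(d-1)$-dimensional space $\mu^\perp$. Consequently $\Pr(v\in A)>0$ for every nonempty open $A\subseteq\mu^\perp$.

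\textbf{Surjectivity of the exponential map.} On the sphere,
\[
\Exp_\mu(v)=\cos(\|v\|)\,\mu+\sin(\|v\|)\,\frac{v}{\|v\|}.
\]
This map is continuous on all of $\mu^\perp$, and its restriction to the closed ball $\{\|v\|\le\pi\}$ is onto $\cS^{d-1}$. Indeed, any $y\neq -\mu$ equals $\Exp_\mu(\Log_\mu y)$ with $\|\Log_\mu y\|<\pi$, and $-\mu$ is the image of the sphere $\|v\|=\pi$. Pick any $y\in U$ and any preimage $v^\ast$ with $\Exp_\mu(v^\ast)=y$. By continuity of $\Exp_\mu$, the set $A:=\Exp_\mu^{-1}(U)$ is open in $\mu^\perp$ and contains $v^\ast$, so it is nonempty. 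Then
\[
\Pr(X_t\in U\mid\mu_t=\mu)=\Pr(v\in A)>0.
\]
Integrating over $\mu$ gives $\Pr(X_t\in U)>0$. Since $U$ was an arbitrary nonempty open set, the support of $\operatorname{law}(X_t)$ is all of $\cS^{d-1}$.

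\textbf{Main obstacle.} The only delicate point is that the Gaussian is unbounded, so $\|v\|$ can exceed the injectivity radius $\pi$. One might then worry that the map is ill-defined or that the antipode $-\mu$ is missed. The argument above sidesteps both concerns, because it uses only continuity and surjectivity of $\Exp_\mu$, not injectivity. The remark that $\smax\lesssim\pi/\sqrt{d-1}$ keeps perturbations inside the injectivity radius is relevant only to the density formulas, not to the support claim.

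A measurability detail also needs care: the conditional probability must be jointly measurable in $\mu$. This holds because $(\mu,\epsilon)\mapsto\Exp_\mu(\sigma(t)\Pi_\mu\epsilon)$ is continuous, which follows by writing $\epsilon=\Pi_\mu\xi$ with $\xi\sim\cN(0,I_d)$. The full-support conclusion at the level of the product manifold follows per token, since the perturbations are independent across tokens.
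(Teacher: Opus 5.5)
Your proof is correct, and it takes a genuinely different route from the paper.

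\textbf{The paper's route.} The paper argues through densities. It uses \cref{lem:full_support_cond}, which gives an explicit principal-branch pushforward density built from the Jacobian of $\Exp_\mu$ in \cref{lem:exp_jac}. That density is positive on $\cS^{d-1}\setminus\{-\mu_t\}$, and mass wrapping past the injectivity radius is handled by a tail bound. The paper then uses three further ingredients:
\begin{enumerate}
\item the map $(x_0,x_1)\mapsto-\mu_t(x_0,x_1)$ is a submersion off the antipodal locus;
\item preimages of points are therefore $\pi$-null, which it justifies via absolute continuity of $\pi$;
\item Fubini.
\end{enumerate}
From these it concludes that $\operatorname{law}(X_t)$ has an a.e.\ positive density, hence full topological support.

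\textbf{Your route.} You test against an arbitrary open $U$ and show $\Pr(X_t\in U\mid\mu_t=\mu)>0$ for \emph{every} $\mu$. This needs only two facts: a nondegenerate Gaussian on $\mu^\perp$ charges every nonempty open set, and $\Exp_\mu$ is continuous and surjective on $\mu^\perp$.

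\textbf{What your route buys.}
\begin{itemize}
\item You need no density formula and no cut-locus estimate.
\item The antipode and vectors beyond radius $\pi$ are treated exactly, since non-injectivity is harmless to your argument.
\item You need no assumption on $\pi$ beyond measurability. Your argument therefore genuinely delivers the ``regardless of the coupling $\pi$ or the structure of $p_1$'' claim of \cref{cor:disconnected}. The paper's proof, as written, leans on absolute continuity of $\pi$, which fails for clustered or empirical $p_1$.
\end{itemize}

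\textbf{What the paper's route buys.} It gives the stronger statement that $\operatorname{law}(X_t)$ has a $\mathrm{vol}_g$-density that is positive a.e. That is what \cref{thm:rectifiability}(ii) later uses as the positive ``denominator'' in its Bayes-rule step. Topological support alone would not supply a density there.

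\textbf{A small point to add.} $\mu_t=\SLERP(x_0,x_1,t)$ is undefined on $\{x_0=-x_1\}$. This set is $\pi$-null because $p_0$ is uniform, so your pointwise bound integrates over a full-measure set.
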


\begin{theorem}[Per-token rectifiability via bridge perturbation]
\label{thm:rectifiability}
Under the stochastic bridge (\cref{def:bridge}) with $\smax > 0$, on a single $\cS^{d-1}$:
\begin{enumerate}[nosep]
\item[(i)] $\operatorname{supp}(\operatorname{law}(X_t)) = \cS^{d-1}$ for all $t \in (0,1)$.
\item[(ii)] The coupling $(X_0, X_t, X_1)$ is rectifiable: the conditional distribution of $(X_0, X_1)$ given $X_t = x$ admits a smooth density for $\operatorname{law}(X_t)$-a.e.\ $x$.
\item[(iii)] The marginal velocity $v_t(x)$ is smooth on $\cS^{d-1} \times [\epsilon_t, 1-\epsilon_t]$, and $\dot{x} = v_t(x)$ admits a unique solution on $[\epsilon_t, 1-\epsilon_t]$ for any initial condition.
\end{enumerate}
\end{theorem}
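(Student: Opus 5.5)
Part~(i) is exactly \cref{prop:full_support}, which we take as given; the remaining work is to write down an explicit density for the joint law of $(X_0,X_t,X_1)$ and read (ii) and (iii) off it. Fix $t\in(0,1)$, so $\sigma(t)=\smax\sin(\pi t)>0$. Conditional on $(x_0,x_1)$, the point $X_t=\Exp_{\mu_t}(\sigma(t)\epsilon)$ is the pushforward of a nondegenerate Gaussian on $T_{\mu_t}\cS^{d-1}$ under the exponential map. The Gaussian tail beyond the cut radius $\pi$ has to be handled. One option is to use the standing assumption $\smax\lesssim\pi/\sqrt{d-1}$ and restrict to the injectivity ball. The cleaner option is to define $k_t(x\mid x_0,x_1)$ as the wrapped density, i.e.\ the sum over all preimages $v\in T_{\mu_t}\cS^{d-1}$ with $\Exp_{\mu_t}(v)=x$ of the Gaussian density divided by the Jacobian of $\Exp_{\mu_t}$. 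On the sphere the preimages of $x\neq\pm\mu_t$ form a discrete family of radii $\theta+2\pi k$ and $2\pi(k+1)-\theta$. The Jacobian is $(\sin r/r)^{d-2}$ in polar form, and the Gaussian weight decays fast enough that the series converges together with all its derivatives. This gives a kernel $k_t$ that is smooth in $(x,\mu_t)$ and strictly positive.

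We then write the joint density of $(X_0,X_1,X_t)$ with respect to $\mathrm{vol}^{\otimes 3}$ as $p_0(x_0)p_1(x_1)k_t(x\mid x_0,x_1)$. The map $(x_0,x_1)\mapsto\mu_t$ is smooth off the set where $x_1=-x_0$ or $\Omega=0$, and that set is null for $p_0\otimes p_1$ because $p_0$ is uniform. The marginal is $p_t(x)=\int k_t\,dp_0\,dp_1$. It is strictly positive, which recovers (i), and by dominated convergence it is smooth in $x$ and $t$. Bayes then gives the conditional density $p_0(x_0)\,k_t(x\mid x_0,x_1)/p_t(x)$ with respect to $dx_0\,dp_1(x_1)$. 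For (ii) I will read ``admits a smooth density'' as absolute continuity with respect to $\mathrm{vol}\otimes p_1$, smooth in the $x$ and $x_0$ arguments. This matches the Hertrich-type rectifiability notion recalled in \cref{app:prelim}, and it holds for every $x$, not only almost every $x$.

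For (iii), the marginal velocity is $v_t(x)=\mathbb{E}[u_t(X_t\mid X_0,X_1)\mid X_t=x]$, written as a ratio of integrals against $p_0 k_t\,dp_1$. Here I would use the bridge velocity $\dot\mu_t$ transported along the geodesic plus $\dot\sigma\,\epsilon$, or equivalently the paper's $\Log_x(x_1)/(1-t)$. The numerator integrand is singular only where $x$ is antipodal to $x_1$ or $\mu_t$. I expect this to be the main obstacle: controlling the $\Log$ cut-locus and the $1/(1-t)$ factor so that differentiation under the integral sign is justified. The plan is to restrict to $t\in[\epsilon_t,1-\epsilon_t]$, where $\sigma(t)$ is bounded below and $1/(1-t)$ is bounded above. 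On that range $k_t$ and all its derivatives are uniformly bounded, while the singular set of $\Log_x(x_1)$ has measure zero with $|\Log|\le\pi$. Moving the $x$-derivatives onto the smooth kernel, for instance by a change of variables $x_1\mapsto Rx_1$ with rotations $R$ that move $x$, gives smoothness of $v_t$ jointly in $(x,t)$.

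Once $v_t$ is smooth on the compact manifold $\cS^{d-1}\times[\epsilon_t,1-\epsilon_t]$, it is globally Lipschitz. Picard--Lindelöf then yields a unique solution, and compactness of the sphere rules out blow-up, so the solution exists on the whole interval for any initial condition. The per-token statement is all that is claimed, so no product-manifold argument is needed here.
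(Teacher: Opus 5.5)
Your skeleton matches the paper's: Bayes' rule for (ii), differentiation under the integral for (iii), then Picard--Lindel\"of. Two choices are real improvements. You condition with respect to $\mathrm{vol}\otimes p_1$, and you use only the uniformity of $p_0$. This avoids the paper's tacit assumption that $\pi$ is absolutely continuous.

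\textbf{First gap: the kernel is not smooth or bounded.} The Jacobian $|\det(d\Exp_\mu)_v|=|\sin r/r|^{d-2}$ vanishes on the whole sphere $\{r=\pi\}\subset T_\mu\cS^{d-1}$, which $\Exp_\mu$ collapses to the single point $-\mu$. Hence the pushforward density grows like $\dg(x,-\mu)^{-(d-2)}$ as $x\to-\mu$. The principal term $(\theta/\sin\theta)^{d-2}$ already does this. Your wrapped shells $r=2\pi\pm\theta$ add the same blow-up at $x=\mu$.

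So for $d\ge 3$ two of your claims are false: that $k_t$ is smooth in $(x,\mu_t)$, and that ``$k_t$ and all its derivatives are uniformly bounded'' on $[\epsilon_t,1-\epsilon_t]$. Your dominated-convergence steps for $p_t$ and for the numerator of $v_t$ therefore have no dominating function. What survives is integrability. For fixed $x_1$, the map $x_0\mapsto\mu_t$ pushes the uniform $p_0$ to a bounded density on $B(x_1,(1-t)\pi)$. Hence $h_t(x,y)=\int k_t(x\mid\mu_t(x_0,y))\,dp_0(x_0)$ is bounded and positive. The argument must average $k_t$ and never differentiate it.

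\textbf{Second gap, more serious: the rotation trick needs a smooth $p_1$.} The substitution $x_1\mapsto Rx_1$ does not move the $x$-derivatives onto the kernel; it moves them onto $p_1$. From $h_t(Rx,Ry)=h_t(x,y)$ and $\Log_{Rx}(Ry)=R\Log_x(y)$ you get
\[
p_t(Rx)=\int h_t(x,y)\,p_1(Ry)\,dy,
\qquad
N_t(Rx)=R\int\Log_x(y)\,h_t(x,y)\,p_1(Ry)\,dy,
\]
where $N_t$ is the numerator of $v_t$. These are differentiable in $R$ only if $p_1$ has a smooth density.

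In the generality you adopted in (ii), (iii) is in fact false. For $p_1=\delta_a$ one has $v_t(x)=\Log_x(a)/(1-t)$ exactly. Near $-a$ this field has norm close to $\pi/(1-t)$ and points away from $-a$. So it has no continuous version at $-a\in\supp\operatorname{law}(X_t)$. Uniqueness also fails there: the curves with $\dg(x(t),a)=\pi(1-t)/(1-t_0)$, run along any great circle through $a$, all start at $-a$ and solve the equation for $t>t_0$. Because $h_t(-a,a)>0$, the same happens at the antipode of every atom of $p_1$, e.g.\ for an empirical data distribution.

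\textbf{What is needed.} You must add the hypothesis that $p_1$ has a smooth density. With it, the two displayed identities give smoothness in $x$ directly. They use only $\|\Log_x(y)\|\le\pi$ and $h_t(x,\cdot)\in L^1$, so neither singularity is ever differentiated. Joint smoothness in $t$ still needs its own argument, since rotations do not act on $t$. For Picard--Lindel\"of, continuity in $t$ together with a Lipschitz bound in $x$ uniform in $t$ suffices. The paper's proof has the same hole: boundedness of $u_t$ does not license differentiating $\Log_x(x_1)$ in $x$.

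\textbf{A smaller point.} $\Log_{X_t}(x_1)/(1-t)$ is not the time derivative of the bridge path. It drops the $\dot\sigma\,\epsilon$ term and the transport of $\epsilon$. So it is not ``equivalent'' to your bridge velocity. Use the paper's $u_t$, since that is the field (iii) refers to.
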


Full proof in \cref{app:proofs_bridge} (cross-token caveat: parts (i)--(ii) lift per-token to $(\cS^{d-1})^N$; the $O(\smax^2)$ transport bound (\cref{prop:approx_transport}) requires an additional $C^2$ regularity assumption on the cross-token marginal velocity). As a consequence (\cref{cor:disconnected}), for histopathology VFM features clustering by tissue type the bridge ensures full support for all $t \in (0,1)$, ruling out the disconnected-support obstruction.

\subsection{Spectral-Informed Anisotropic Decoder Regularization}
\label{sec:aniso}

Without noise injection, a decoder trained on clean VFM features must handle generated features $\hat{z} \sim p_\theta$ that may fall in regions of $\cS^{d-1}$ underrepresented in the training distribution. Isotropic noise augmentation improves generation quality (gFID) but degrades reconstruction quality (rFID) along all directions equally. Empirically (\cref{tab:ablation}), allocating noise according to the SVD of the trained DiT's velocity-field Jacobian $J(z) = \nabla_z v_\theta(z, t)$ yields complementary rFID/gFID gains; we interpret this as the SVD decomposing each token's tangent space into \emph{high-energy} directions $U_H$ (where the velocity field is most sensitive to perturbation; these are the top-$k^*$ singular directions by spectral energy) and \emph{low-energy} directions $U_L$ (where generated features may drift with smaller velocity-field response). We inject asymmetric noise:
\begin{equation}
  \Sigma_\text{noise}(z) = \sigma_H^2\,U_H U_H^\top + \sigma_L^2\,U_L U_L^\top, \quad \sigma_H \ll \sigma_L,
  \label{eq:aniso_cov}
\end{equation}
where $U_H, U_L$ are the top-$k^*$ and remaining singular vectors of $J(z)$, with $k^*$ defined by the energy threshold $\tau$: $k^* = \min\{k : \sum_{i \le k} s_i^2 / \sum_i s_i^2 \ge \tau\}$ where $s_i$ are the singular values of $J(z)$. The basis $\{U_H(\bar{z}_c), U_L(\bar{z}_c)\}$ is precomputed once from the trained DiT: training features are $k$-means clustered into $K$ centroids $\bar{z}_c$, the per-(centroid, token) SVD of $J$ is computed via batched forward-mode automatic differentiation (AD) with $m$ random tangent probes, and at training time each sample is assigned to its nearest centroid and receives the corresponding $\Sigma_\text{noise}(\bar{z}_c)$. Crucially, no discrete spectral gap is required---only the continuous energy ordering matters. Full computation and deployed values of $K$, $m$, $\sigma_H$, $\sigma_L$, $\tau$ in \cref{app:implementation}.

\subsection{Decoder Training}
\label{sec:decoder_training}

The decoder $D_\phi: (\cS^{d-1})^N \to \mathbb{R}^{H \times W \times 3}$ maps VFM embeddings on the product hypersphere back to histopathology images, trained from scratch with anisotropic noise injection from the start using a 4-component loss:
\begin{equation}
  \cL_\text{dec} = \lambda_{\ell_1}\|\hat{I} - I\|_1 + \lambda_\text{LPIPS}\text{LPIPS}(\hat{I}, I) + \lambda_\text{cos}(1 - \langle z_\text{rt}, z\rangle) + \omega_G \lambda_\text{adapt} \cL_\text{adv},
  \label{eq:decoder_loss}
\end{equation}
where $\hat{I} = D_\phi(\tilde{z})$ is the reconstruction from noisy features $\tilde{z} = (z + n)/\|z + n\|$ with $n \sim \cN(0, \Sigma_\text{noise}(z))$ (\cref{eq:aniso_cov}), $z_\text{rt} = E(D_\phi(\tilde{z}))$ is the round-trip re-encoding, and $\cL_\text{adv}$ is an adversarial loss with adaptive weight following VQGAN~\citep{esser2021taming}. The cosine round-trip loss targets the \emph{clean} features $z$ (not $\tilde{z}$), forcing the decoder to denoise. Training follows a standard VQGAN-style staged schedule~\citep{esser2021taming,rombach2022high}---reconstruction-only warmup, discriminator warmup, then full adversarial training---with anisotropic noise from step~0 so the decoder jointly allocates capacity for reconstruction and noise robustness (\cref{app:implementation}).

Deriving the noise covariance from the SVD of the velocity-field Jacobian (\cref{eq:aniso_cov}) creates a deliberate coupling: the spectral decomposition governing the DiT's directional sensitivity determines the decoder's robustness allocation. We observe a superadditive interaction between the stochastic bridge and anisotropic decoder (\cref{tab:ablation}, analyzed in \cref{sec:ablation}).

\section{Experiments}
\label{sec:experiments}

\subsection{Implementation Details}
\label{sec:exp_setup}

\paragraph{Dataset and encoder.} We use 12.2M patches from TCGA-BRCA~\citep{tcga2012comprehensive} and 3.4M patches from TCGA-COADREAD~\citep{tcga2012colorectal} at 20$\times$ magnification (256$\times$256 pixels). To obtain training VFM features, UNI~\citep{chen2024uni} (ViT-L/16, $d = 1024$, $N = 256$ tokens) was used as the encoder. 

\paragraph{Training.} We train sequentially in two stages on 8$\times$H200 GPUs. In Stage~1, we train the LightningDiT-XL DiT~\citep{yao2025vavae} (${\sim}676$M parameters) with the extracted UNI patch embeddings for 95K steps with batch size 1024 using the stochastic bridge flow-matching loss (\cref{alg:training}). In Stage~2, we extract the cached SVD of the velocity-field Jacobian from the trained DiT first, and use it to train a ViT-XL decoder~\citep{zheng2025rae,dosovitskiy2021image} from scratch with anisotropic noise (\cref{sec:aniso}) for 90K steps with batch size of 512. Full details in \cref{app:implementation}.

\paragraph{Baselines.} We benchmark against \emph{state-of-the-art} pathology-domain models ZoomLDM~\citep{graikos2024lrdm} and PixCell~\citep{yellapragada2025pixcell}, and \emph{state-of-the-art} natural-image-domain models that utilize similar VFM-encoded latent spaces for diffusion model training: RAE~\citep{zheng2025rae} as our Euclidean baseline, and SVG~\citep{shi2025svg}, which augments RAE's design with a learned residual encoder on top of the frozen VFM features. All baselines were trained on TCGA-BRCA and TCGA-COADREAD using default hyperparameters and settings from their respective papers. RAE and SVG were trained with their respective natural-image-domain VFM encoders to faithfully replicate their training pipelines (more protocol details are in \cref{app:implementation}).

\paragraph{Evaluation.} We evaluate with three ImageNet-pretrained distribution metrics --- FID~\citep{heusel2017gans}, KID~\citep{binkowski2018demystifying}, CMMD~\citep{jayasumana2024rethinking} --- and three pathology-aware variants FvD, KvD, vMMD, in which the underlying feature extractor is replaced by Virchow2~\citep{zimmermann2024virchow2}, a \emph{state-of-the-art} histopathology VFM. Reconstruction quality is measured with LPIPS~\citep{zhang2018perceptual}. We focus the analysis on pathology-domain baselines (ZoomLDM, PixCell), since the natural-image-domain baselines (RAE, SVG) use natural-image VFMs as their encoders and are included only as the closest Euclidean counterparts to STREAM's design.

\subsection{Image Generation Results}
\label{sec:main_results}

\begin{table}[t]
  \caption{Generation results on TCGA-BRCA and TCGA-COADREAD $20\times$ patches ($256{\times}256$). All metrics lower-is-better; \textbf{best} bold, \underline{second-best} underlined within each method group. LPIPS is reconstruction-only.}
  \label{tab:results}
  \centering
  \scriptsize
  \setlength{\tabcolsep}{1.5pt}
  \begin{minipage}[t]{0.495\linewidth}
    \centering
    \begin{tabular}{@{}l ccccccc@{}}
      \multicolumn{8}{@{}c@{}}{\textbf{TCGA-BRCA}} \\
      \toprule
      Method & FID & KID & CMMD & LPIPS & FvD & KvD & vMMD \\
      \midrule
      \multicolumn{8}{@{}l}{\textit{Reconstruction --- Pathology domain}} \\
      ZoomLDM         & 4.88              & 0.0040             & 0.2685             & 0.048             & 37.26              & 0.038             & 4.12              \\
      PixCell         & \underline{2.88}  & \underline{0.0025} & \underline{0.2590} & \textbf{0.036}    & \underline{18.11}  & \underline{0.018} & \underline{2.04}  \\
      \textbf{STREAM} & \textbf{2.42}     & \textbf{0.0019}    & \textbf{0.1509}    & \underline{0.047} & \textbf{9.91}      & \textbf{0.008}    & \textbf{0.98}     \\
      \multicolumn{8}{@{}l}{\textit{Reconstruction --- VFM-based generation (natural-image domain)}} \\
      RAE             & \underline{7.35}  & \underline{0.0054} & \textbf{0.2102}    & \underline{0.139} & \underline{144.58} & \underline{0.147} & \underline{15.03} \\
      SVG             & \textbf{6.30}     & \textbf{0.0050}    & \underline{0.3100} & \textbf{0.116}    & \textbf{74.13}     & \textbf{0.067}    & \textbf{7.23}     \\
      \midrule
      \multicolumn{8}{@{}l}{\textit{Generation --- Pathology domain}} \\
      ZoomLDM         & \underline{7.43}  & \underline{0.0058} & \underline{0.2611} & --                & \underline{196.41} & \underline{0.194} & \underline{21.01} \\
      PixCell         & 104.18            & 0.0984             & 0.6147             & --                & 1298.00            & 1.410             & 169.38            \\
      \textbf{STREAM} & \textbf{6.61}     & \textbf{0.0041}    & \textbf{0.1282}    & --                & \textbf{78.04}     & \textbf{0.048}    & \textbf{5.16}     \\
      \multicolumn{8}{@{}l}{\textit{Generation --- VFM-based generation (natural-image domain)}} \\
      RAE             & \textbf{10.14}    & \textbf{0.0067}    & \textbf{0.2638}    & --                & \textbf{264.54}    & \textbf{0.255}    & \textbf{27.09}    \\
      SVG             & \underline{69.62} & \underline{0.0573} & \underline{0.4583} & --                & \underline{1169.21} & \underline{1.157} & \underline{113.46} \\
      \bottomrule
    \end{tabular}
  \end{minipage}\hfill
  \begin{minipage}[t]{0.495\linewidth}
    \centering
    \begin{tabular}{@{}l ccccccc@{}}
      \multicolumn{8}{@{}c@{}}{\textbf{TCGA-COADREAD}} \\
      \toprule
      Method & FID & KID & CMMD & LPIPS & FvD & KvD & vMMD \\
      \midrule
      \multicolumn{8}{@{}l}{\textit{Reconstruction --- Pathology domain}} \\
      ZoomLDM         & \underline{4.12}  & \underline{0.0033} & \underline{0.1068} & \underline{0.050} & \underline{29.50}  & \underline{0.024} & \underline{2.35}  \\
      PixCell         & \textbf{2.58}     & \textbf{0.0021}    & \underline{0.0796} & \textbf{0.038}    & \textbf{15.87}     & \textbf{0.012}    & \textbf{1.20}     \\
      \textbf{STREAM} & 5.64              & 0.0043             & \textbf{0.0407}    & 0.075             & 32.08              & 0.030             & 2.72              \\
      \multicolumn{8}{@{}l}{\textit{Reconstruction --- VFM-based generation (natural-image domain)}} \\
      RAE             & \underline{10.61} & \underline{0.0077} & \textbf{0.0717}    & \underline{0.200} & \underline{109.78} & \underline{0.101} & \underline{9.13}  \\
      SVG             & \textbf{4.68}     & \textbf{0.0030}    & \underline{0.1859} & \textbf{0.118}    & \textbf{44.40}     & \textbf{0.029}    & \textbf{2.83}     \\
      \midrule
      \multicolumn{8}{@{}l}{\textit{Generation --- Pathology domain}} \\
      ZoomLDM         & \underline{8.09}  & \underline{0.0053} & \underline{0.0613} & --                & \underline{139.63} & \underline{0.097} & \underline{9.52}  \\
      PixCell         & 127.24            & 0.1132             & 0.9570             & --                & 1742.85            & 2.145             & 218.70            \\
      \textbf{STREAM} & \textbf{7.68}     & \textbf{0.0042}    & \textbf{0.0293}    & --                & \textbf{91.96}     & \textbf{0.055}    & \textbf{5.10}     \\
      \multicolumn{8}{@{}l}{\textit{Generation --- VFM-based generation (natural-image domain)}} \\
      RAE             & \textbf{10.77}    & \textbf{0.0073}    & \textbf{0.0655}    & --                & \textbf{173.07}    & \underline{0.144} & \underline{13.39} \\
      SVG             & \underline{16.52} & \underline{0.0112} & \underline{0.0690} & --                & \underline{244.70} & \textbf{0.141}    & \textbf{12.75}    \\
      \bottomrule
    \end{tabular}
  \end{minipage}
\end{table}

As shown in \cref{tab:results}, STREAM achieves the best gFID across both datasets (BRCA $6.61$, COADREAD $7.68$), the best rFID on BRCA ($2.42$), and competitive COADREAD rFID ($5.64$). PixCell's design prevents unconditional generation: it always requires an input image (or a VFM embedding extracted from one) at inference. Forced to generate from its learned latent space alone, PixCell reports extremely high gFID values, empirically realizing the conditioning-dominance ratio $\rho_\text{cond}{=}62\%$ measured in \cref{sec:conditioning}. ZoomLDM partially sidesteps this requirement by training a conditional diffusion model directly on the VFM embedding, allowing unconditional sampling without an external image; yet its gFID remains elevated, especially compared to that of STREAM, again consistent with conditioning collapse. If ZoomLDM were also sampled unconditionally, its gFID would likely approach PixCell's catastrophic regime --- both results indicate that the VAE-induced latent space is too semantically poor to support true generative modeling, even where reconstruction remains strong.

STREAM contradicts the reconstruction--generation tradeoff~\citep{yao2025vavae} by treating the VFM embedding as the latent space itself without external conditioning, achieving superior performance over its pathology-domain counterparts. However, STREAM's performance is not solely due to UNI being utilized as the latent space: pure RFM without STREAM's bridge $\times$ anisotropic-decoder design actually performs \emph{worse} than ZoomLDM and PixCell in both reconstruction and generation (\cref{tab:ablation}).

SVG augments RAE with a learnable residual encoder optimized for reconstruction, which explains the superior rFID and gFID results on both datasets. Furthermore, across both datasets and most metrics, both RAE and SVG display worse generation performance compared to reconstruction --- confirming the reconstruction--generation tradeoff for models trained on high-dimensional VFM latents~\citep{yao2025vavae}. Across both datasets, the Virchow2-based FvD/KvD/vMMD largely preserve method ordering relative to their ImageNet/CLIP counterparts FID/KID/CMMD --- showing that natural-image-domain evaluation can also be quite predictive of histopathology image quality (with the exception of CMMD vs.\ vMMD).

\begin{figure}[!ht]
  \centering
  \setlength{\tabcolsep}{1.5pt}
  \renewcommand{\arraystretch}{1.05}
  \begin{tabular}{@{}>{\centering\arraybackslash}m{0.035\linewidth}@{\hspace{3pt}}*{5}{>{\centering\arraybackslash}m{0.175\linewidth}}@{}}
     & ZoomLDM & PixCell & STREAM (Ours) & RAE & SVG \\
    \rotatebox{90}{\small BRCA} &
      \includegraphics[width=\linewidth]{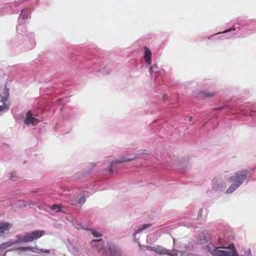} &
      \includegraphics[width=\linewidth]{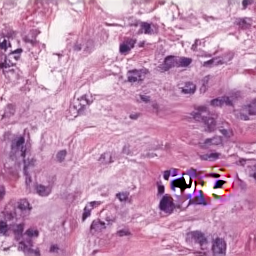} &
      \includegraphics[width=\linewidth]{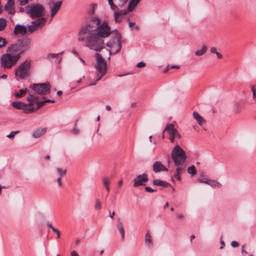} &
      \includegraphics[width=\linewidth]{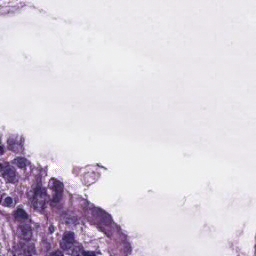} &
      \includegraphics[width=\linewidth]{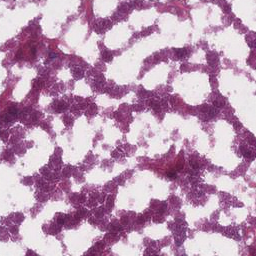} \\
    \rotatebox{90}{\small COADREAD} &
      \includegraphics[width=\linewidth]{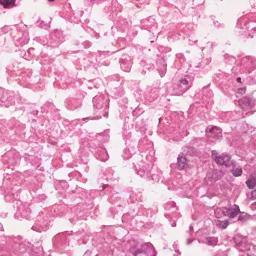} &
      \includegraphics[width=\linewidth]{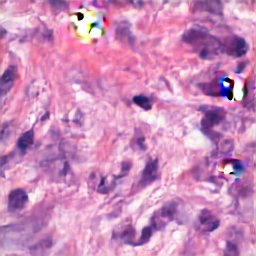} &
      \includegraphics[width=\linewidth]{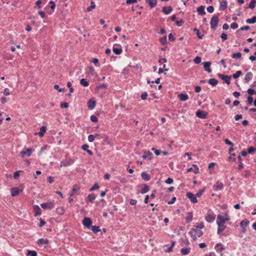} &
      \includegraphics[width=\linewidth]{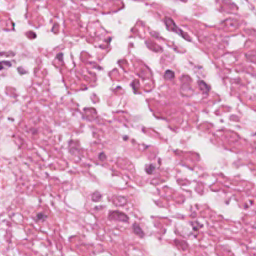} &
      \includegraphics[width=\linewidth]{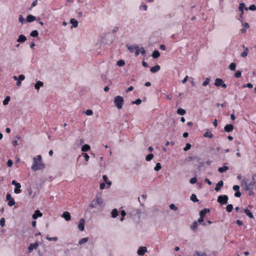} \\
  \end{tabular}
  \caption{Generation samples from each trained model on TCGA-BRCA and TCGA-COADREAD.}
  \label{fig:gen_gallery}
\end{figure}

\Cref{fig:gen_gallery} shows generation samples (i.e., the same set used for gFID evaluation) from each model---ZoomLDM, PixCell, STREAM (Ours), RAE, and SVG---selected by K-means clustering of ground-truth (GT) images in the Virchow2 feature space; for each cluster, we display the per-model sample whose Virchow2 feature is closest to the cluster centroid. Visual inspection corroborates the quantitative results: ZoomLDM and PixCell exhibit visible artifacts and structural features that do not represent real histopathology tissue; RAE and SVG show similar results, which we attribute to their natural-image-domain VFM encoders. STREAM (Ours), in contrast, produces realistic-looking histopathology images with well-defined nuclei morphology and accurate structural detail. Additional reconstruction and generation samples for all five models are provided in \cref{app:image_gallery}.

\section{Ablation Studies}
\label{sec:ablation}

\paragraph{Ablation on STREAM design.} All ablation studies are performed on the TCGA-BRCA dataset. \Cref{tab:ablation} isolates the two core STREAM contributions. Anisotropic noise alone (without the bridge) yields a meaningful improvement: rFID drops from $6.51$ to $5.02$ and gFID from $9.07$ to $8.27$, consistent with allocating noise budget away from high-sensitivity directions. The bridge alone with isotropic noise marginally degrades gFID ($9.07 \to 9.37$) while leaving rFID unchanged. Yet the full combination produces a superadditive effect: at 45K decoder steps, rFID drops to $3.52$ and gFID to $6.86$---improvements far exceeding the sum of individual contributions.

The superadditive effect admits a heuristic spectral interpretation. By \cref{prop:approx_transport} the bridge's transport perturbation is asymptotically $O(\smax^2)$, and by \cref{thm:rectifiability} the resulting marginal velocity field is smooth on $(\cS^{d-1})^N$ at intermediate $t$; heuristically, since $J(z)$ amplifies perturbations anisotropically, the bridge sharpens velocities along high-energy directions $U_H$ while leaving low-energy directions $U_L$ with larger residual. With an isotropic decoder which lacks directional structure, this redistribution slightly degrades gFID ($9.07 \to 9.37$) because the residual along $U_L$ passes unfiltered to image space. The anisotropic decoder is complementary, allocating robustness to $U_L$ ($\sigma_L = 0.02$) while preserving fidelity in $U_H$ ($\sigma_H = 0.002$); together they eliminate the dominant error in each subspace, producing the joint rFID/gFID improvement of \cref{tab:ablation} (bottom row). The two following ablation studies support this claim (\cref{tab:panel_d}; \cref{tab:e2c} and \cref{app:encoder_quality_dependence}).

\paragraph{Anisotropic decoder directional preference.}
At the decoder level, we directly test the spectral mechanism via a per-token directional Lipschitz contrast on $200$ test BRCA images. For each image we sample one unit-norm tangent direction in $U_H$ and one in the orthogonal complement $U_L$ (within the cached per-(centroid, token) basis), perturb $z$ at matched per-token tangent norm $\epsilon \in \{0.005, 0.01, 0.016, 0.02\}$, decode, and measure LPIPS sensitivity. The probe range is chosen to bracket the decoder's training noise envelope: the anisotropic decoder was trained with $\sigma_L = 0.02$ as the maximum low-energy magnitude, so the largest probe matches the training scale and the smaller values stay strictly inside it. Per-image polyfit slopes give per-direction Lipschitz constants $L_{U_H}^\text{dec}, L_{U_L}^\text{dec}$; their ratio $R_\text{dec} = L_{U_H}^\text{dec}/L_{U_L}^\text{dec}$ characterizes the decoder's directional preference, with values $>1$ indicating $U_H$-preference and $<1$ indicating $U_L$-preference (\cref{tab:panel_d}). The isotropic baseline most likely develops a $U_L$-preference from data structure: along-manifold tangent variations produce visually meaningful image changes that the decoder must preserve, while manifold-normal variations are largely smoothed out during reconstruction. The anisotropic decoder reverses this preference by an order of magnitude --- $R_\text{dec} = 0.43$ for the isotropic baseline (slight $U_L$-preference) versus $\mathbf{3.4}$ for the anisotropic decoder (strong $U_H$-preference; \cref{tab:panel_d}) --- directly confirming that anisotropic training causes the decoder to absorb $U_L$ noise while preserving fidelity along $U_H$. The deployed $\sigma_L = 0.02$ sits at the optimum of a decoder noise ablation over $\{0, 0.01, 0.02, 0.03\}$ on TCGA-BRCA (\cref{app:sigmaL_ablation}): smaller magnitudes, larger magnitudes, and removing low-energy noise entirely all degrade gFID.

\paragraph{Ablation on Encoder dependence.}
To test whether our STREAM pipeline requires a domain-matched encoder (VFM), we replace UNI with DINOv2-L (a natural-image VFM, mismatched to the histopathology domain) and re-run the full STREAM pipeline. The anisotropic decoder advantage on UNI is largely erased on DINOv2-L (\cref{tab:e2c}), confirming that the mechanism is encoder-dependent. Appendix analyses identify the failure mode (\cref{app:encoder_quality_dependence}): UNI's pathology pretraining yields more stable cross-centroid bases for the SVD of the velocity-field Jacobian than DINOv2-L (median centroid alignment $0.36$ vs.\ $0.23$) and tighter effective-rank distributions without degenerate centroids; meanwhile, when we train a DiT on DINOv2-L extracted features it still generates distributionally-correct latents (latent-space Fr\'echet distance ratio $1.04$ vs.\ UNI; \cref{app:e2d}), so the gFID gap (\cref{tab:e2c}) isolates to the decoder rather than to a DiT training issue. This is expected, as STREAM trains on the Riemannian manifold of the extracted VFM and the decoder must handle all domain mismatch by itself. Unlike approaches that tune the encoder (REPA-E~\citep{leng2025repae}, AlignTok~\citep{chen2026aligntok}) or add auxiliary trainable rFID-reduction modules (DINO-SAE~\citep{chang2026dinosae}, SVG~\citep{shi2025svg}, LV-RAE~\citep{liu2026lvrae}), STREAM's anisotropic decoder---tied to the SVD of the velocity-field Jacobian at fixed VFM dimension---reverses RAE's isotropic-noise rFID--gFID asymmetry, delivering joint improvement in \cref{tab:ablation}.

\begin{table}[t]
  \begin{minipage}[t]{0.41\linewidth}
    \centering
    \caption{Ablation study on TCGA-BRCA (UNI encoder). Results reported at 45K decoder training steps.}
    \label{tab:ablation}
    \footnotesize
    \setlength{\tabcolsep}{4pt}
    \begin{tabular}{@{}cccc@{}}
      \toprule
      Bridge & Decoder & rFID $\downarrow$ & gFID $\downarrow$ \\
      \midrule
      \ding{55}        & Isotropic            & 6.51          & 9.07          \\
      \ding{55}        & Anisotropic (Ours)   & 5.02          & 8.27          \\
      \ding{51} (Ours) & Isotropic            & 6.51          & 9.37          \\
      \ding{51} (Ours) & Anisotropic (Ours)   & \textbf{3.52} & \textbf{6.86} \\
      \bottomrule
    \end{tabular}
  \end{minipage}\hspace{0.02\linewidth}%
  \begin{minipage}[t]{0.21\linewidth}
    \centering
    \caption{Anisotropic decoder inverts isotropic's directional preference.}
    \label{tab:panel_d}
    \footnotesize
    \setlength{\tabcolsep}{4pt}
    \begin{tabular}{@{}lc@{}}
      \toprule
      Decoder & $R_\text{dec}$ \\
      \midrule
      Isotropic           & $0.43$        \\
      Anisotropic (Ours)  & $\mathbf{3.4}$ \\
      \bottomrule
    \end{tabular}
  \end{minipage}\hspace{0.02\linewidth}%
  \begin{minipage}[t]{0.32\linewidth}
    \centering
    \caption{Encoder-dependence ablation on TCGA-BRCA, trained on the natural-image-domain DINOv2-L VFM.}
    \label{tab:e2c}
    \footnotesize
    \setlength{\tabcolsep}{4pt}
    \begin{tabular}{@{}lcc@{}}
      \toprule
      Decoder             & rFID $\downarrow$ & gFID $\downarrow$ \\
      \midrule
      Isotropic           & 7.67              & 11.05             \\
      Anisotropic (Ours)  & 7.71              & 11.22             \\
      \bottomrule
    \end{tabular}
  \end{minipage}
\end{table}

\section{Conclusion}
\label{sec:conclusion}

We presented STREAM, a Riemannian flow-matching framework for unconditional histopathology generation in VFM latent spaces; a stochastic bridge paired with a spectral-informed anisotropic decoder. Both combine superadditively, achieving \emph{state-of-the-art} performance on TCGA-BRCA and competitive performance on TCGA-COADREAD. Our limitations are that Theorem 3 is a population-level result with a per-token assumption, and our empirical evidence verifies the spectral mechanism's predicted signatures.


\clearpage
\bibliographystyle{plainnat}
\bibliography{main}

\clearpage
\appendix

\section{Mathematical Preliminaries}
\label{app:prelim}

We collect established results invoked in our proofs. All results in this section are known.

\subsection{Riemannian Primitives on $\cS^{d-1}$}
\label{app:rprim}

The geodesic distance, exponential map, and logarithmic map on the unit hypersphere are:
\begin{align}
  \dg(x, y) &= \arccos(\langle x, y\rangle), \label{eq:geodesic_dist} \\
  \Exp_x(v) &= \cos(\|v\|)\,x + \sin(\|v\|)\,\frac{v}{\|v\|}, \quad v \in T_x\cS^{d-1}, \label{eq:exp_map} \\
  \Log_x(y) &= \frac{\theta}{\sin\theta}(y - \cos\theta\,x), \quad \theta = \dg(x, y). \label{eq:log_map}
\end{align}

\subsection{Background: OT, Curvature-Dimension, Flow Matching}

We collect the standard background invoked in our proofs in compact form. The 2-Wasserstein distance $W_2^2(p_0, p_1) = \inf_{\gamma \in \Gamma(p_0, p_1)} \int d_g(x,y)^2\,d\gamma$ admits the dynamic Benamou--Brenier formulation~\citep{benamou2000computational}; on $\cS^{d-1}$, optimal transport (OT) maps inherit smoothness from the Ma--Trudinger--Wang (MTW) condition~\citep{loeper2009regularity, ma2005regularity}. The sphere satisfies the Bakry--\'Emery curvature-dimension condition $\mathrm{CD}(d-2,\infty)$~\citep{bakry1985diffusions}, with $\Ric \ge (d-2)g$ implying $W_2$-contraction of the heat semigroup at rate $d-2$~\citep{vonrenesse2005transport}; this makes $p_0 = \mathrm{Uniform}(\cS^{d-1})$ the canonical source. Flow matching~\citep{lipman2023flow,albergo2023stochastic,liu2023flow,liu2022rectified} learns a velocity field via the conditional FM (CFM) loss $\cL_\text{CFM} = \mathbb{E}[\|v_\theta - u_t\|^2]$; for Riemannian flows on $\cS^{d-1}$~\citep{chen2024riemannian}, the SLERP conditional path $\psi_t(x_0|x_1) = \SLERP(x_0, x_1, t)$ and conditional velocity $u_t = \Log_{x_t}(x_1)/(1-t)$ are the standard choices.

\subsection{Rectifiability Theory}
\label{app:rect_theory}

The Euclidean rectifiability theory of \citet{hertrich2025rectified} provides a sufficient condition we extend to $\cS^{d-1}$ as an assumption (cf.\ \cref{sec:bridge}):

\textbf{Theorem (Sufficient Rectifiability)}~\citep[Thm.~14]{hertrich2025rectified}. \emph{If $P_{X_0|X_1=x_1}$ is absolutely continuous with smooth positive density, then $(X_0, X_1)$ is rectifiable.}

\textbf{Proposition (Disconnected-support obstruction)}~\citep[Prop.~10]{hertrich2025rectified}. \emph{There exist couplings with zero rectification loss whose intermediate marginals have disconnected support and whose rectification fixed point is not the optimal-transport coupling, invalidating earlier rectification--OT equivalence claims.}

Both results are stated and proved in $\mathbb{R}^d$ with gradient potentials and full Lebesgue support; we treat the Riemannian extension as an assumption (\cref{sec:bridge}, \cref{prop:approx_transport}). Recent rectified-flow convergence theory~\citep{bansal2024wasserstein,lee2024improving} establishes regularity-based Wasserstein bounds and training improvements analogous to ours; STREAM's bridge perturbation supplies on $\cS^{d-1}$ the smooth-density premise that Hertrich's Thm.~14 requires (on the cut-locus complement, a full-volume subset for $d \ge 3$).

\section{Complete Proofs}
\label{app:proofs}

\subsection{Proof of \texorpdfstring{\cref{thm:rectifiability}}{Rectifiability Theorem}: Rectifiability via Bridge Perturbation}
\label{app:proofs_bridge}

\begin{lemma}[Exponential map Jacobian]
\label{lem:exp_jac}
For $v \in T_x\cS^{d-1}$ with $0 < \|v\| = r < \pi$ (i.e., $\Exp_x(v)$ outside the cut locus of $x$): $\det(d\Exp_x)_v = (\sin r / r)^{d-2}$.
\end{lemma}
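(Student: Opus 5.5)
We compute the differential of $\Exp_x$ directly from the closed form \cref{eq:exp_map} and diagonalize it in an orthonormal frame adapted to $v$. Write $v = r u$ with $u \in T_x\cS^{d-1}$, $\|u\|=1$. Extend $u$ to an orthonormal basis $\{u, e_1, \dots, e_{d-2}\}$ of $T_x\cS^{d-1}$. The image point is $y = \Exp_x(v) = \cos r\, x + \sin r\, u$. Its tangent space $T_y\cS^{d-1}$ has orthonormal basis $\{\dot\gamma, e_1,\dots,e_{d-2}\}$, where $\dot\gamma = -\sin r\, x + \cos r\, u$. Each $e_i$ is orthogonal to both $x$ and $u$, hence to $y$, so it lies in $T_y\cS^{d-1}$. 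The determinant is then taken with respect to these two orthonormal bases.

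\emph{Radial direction.} Differentiate $s \mapsto \Exp_x((r+s)u) = \cos(r+s)x + \sin(r+s)u$ at $s=0$. This gives $(d\Exp_x)_v(u) = \dot\gamma$, so the radial eigenvalue is $1$ (Gauss lemma).

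\emph{Transverse directions.} Differentiate $s \mapsto \Exp_x(v + s e_i)$ at $s=0$. We have $\|v+se_i\| = \sqrt{r^2+s^2} = r + O(s^2)$, so the norm does not vary to first order. The unit direction $(v+se_i)/\|v+se_i\|$ has derivative $e_i/r$. Hence
\begin{equation*}
(d\Exp_x)_v(e_i) = \sin r \cdot \frac{e_i}{r} = \frac{\sin r}{r}\, e_i .
\end{equation*}
This agrees with the Jacobi field $J(t) = \sin(t)\, E_i(t)$ on the unit-curvature sphere, rescaled by $1/r$.

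In the chosen bases the matrix of $(d\Exp_x)_v$ is therefore diagonal, $\operatorname{diag}(1, \sin r/r, \dots, \sin r/r)$, with $d-2$ copies of $\sin r/r$. Its determinant is $(\sin r/r)^{d-2}$.

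The only delicate point is checking that the frames are genuinely orthonormal, with no cross terms. The transverse derivative must have no component along $x$ or $\dot\gamma$. This holds because $\partial_s\|v+se_i\|$ vanishes at $s=0$, which kills the terms coming from differentiating $\cos\|\cdot\|$ and the radial part of $\sin\|\cdot\|$.

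The hypothesis $0<r<\pi$ is used in two places. First, $\sin r > 0$, so the determinant is positive and $\Exp_x$ is a local diffeomorphism there; at $r=\pi$ it degenerates at the antipode, which is the cut locus. Second, $r>0$ keeps $u = v/r$ well defined. As $r \to 0$ the formula extends continuously to $1$.
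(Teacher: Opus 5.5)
Your proof is correct, but it takes a different route from the paper.

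The paper's argument is a one-line appeal to the volume element in geodesic polar coordinates. On $\cS^{d-1}$ this element is $\sin^{d-2}(r)\,dr\,d\omega$, on $T_x\cS^{d-1}$ it is $r^{d-2}\,dr\,d\omega$, and $\Exp_x$ carries polar coordinates $(r,\omega)$ to the same geodesic polar coordinates. The Jacobian is therefore their ratio.

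You instead differentiate the closed form \eqref{eq:exp_map} directly. Using the adapted orthonormal frames $\{u,e_1,\dots,e_{d-2}\}$ and $\{\dot\gamma,e_1,\dots,e_{d-2}\}$, you show that $(d\Exp_x)_v$ is $\operatorname{diag}(1,\sin r/r,\dots,\sin r/r)$. Your verification that $\partial_s\|v+se_i\|$ vanishes at $s=0$, so no cross terms appear, is right. So is your check that each $e_i$ lies in $T_y\cS^{d-1}$.

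What your version buys:
\begin{itemize}
\item It is self-contained. It relies only on the formula for $\Exp_x$ the paper already states, not on a cited textbook fact.
\item It gives more than the determinant: the full singular-value structure of the differential. This includes the Gauss lemma radially, nondegeneracy for $0<r<\pi$, and a sign fixed by the parallel-transport-compatible choice of frames.
\end{itemize}

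What the paper's version buys: it is shorter, and it produces exactly the volume-distortion factor $|\det|$ used in the change of variables in \cref{lem:full_support_cond}. However, the polar-coordinate volume formula it invokes is itself normally derived from the Jacobi fields $J(s)=\sin(s)\,E_i(s)$ on the unit sphere. Your computation can therefore be read as unpacking the fact the paper cites.
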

\begin{proof}
In geodesic polar coordinates the Riemannian volume element is $\sin^{d-2}(r)\,dr\,d\omega$ and the Euclidean tangent-space element is $r^{d-2}\,dr\,d\omega$~\citep[Ch.~4]{docarmo1992riemannian}; the ratio is $(\sin r/r)^{d-2}$.
\end{proof}

\begin{lemma}[Full support of conditional distribution]
\label{lem:full_support_cond}
For any $\sigma > 0$ and $\mu \in \cS^{d-1}$, the distribution of $X = \Exp_\mu(\sigma\epsilon)$ with $\epsilon \sim \cN(0, \Pi_\mu)$ (the rank-$(d-1)$ tangent-space Gaussian, embedded in $\mathbb{R}^d$) has positive density on $\cS^{d-1} \setminus \{-\mu\}$.
\end{lemma}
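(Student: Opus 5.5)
The plan is to realize $X$ as the pushforward of the tangent Gaussian under the exponential map and compute its density by change of variables, using \cref{lem:exp_jac}. First, identify $T_\mu\cS^{d-1}$ with $\mathbb{R}^{d-1}$ via an orthonormal basis of $\mu^\perp$. Under this identification $\sigma\epsilon$ has the nondegenerate Gaussian density
\[
\varphi_\sigma(v) = (2\pi\sigma^2)^{-(d-1)/2}\exp\bigl(-\|v\|^2/(2\sigma^2)\bigr),
\]
which is smooth and strictly positive on all of $T_\mu\cS^{d-1}$, since $\Pi_\mu$ has rank $d-1$ and is the identity on $\mu^\perp$.

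Next, restrict to the open ball $B_\pi = \{v \in T_\mu\cS^{d-1} : \|v\| < \pi\}$. On $B_\pi$ the map $\Exp_\mu$ is a diffeomorphism onto $\cS^{d-1}\setminus\{-\mu\}$, with inverse $\Log_\mu$ from \cref{eq:log_map}; this inverse is smooth there because $\theta/\sin\theta$ is smooth for $\theta \in [0,\pi)$. The change-of-variables formula together with \cref{lem:exp_jac} then gives a contribution to the density of $X$ at $y = \Exp_\mu(v)$, $\|v\| = r$, equal to
\[
\varphi_\sigma(v)\,\bigl(r/\sin r\bigr)^{d-2}.
\]
At $r = 0$ the ratio $r/\sin r$ extends smoothly to $1$. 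For $0<r<\pi$ this expression is strictly positive and finite.

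Then account for the mass with $\|v\| \ge \pi$. These vectors also land on the sphere, since $\Exp_\mu$ is $2\pi$-periodic along each ray. Decompose $T_\mu\cS^{d-1}$ into the shells $k\pi \le \|v\| < (k+1)\pi$. Each open shell maps diffeomorphically, up to the antipodal reflection of the direction when $k$ is odd, onto $\cS^{d-1}\setminus\{\pm\mu\}$, and its Jacobian factor is $(r/|\sin r|)^{d-2}$. Every such contribution is nonnegative. The total density on $\cS^{d-1}\setminus\{-\mu\}$ is therefore at least the $k=0$ term, which is strictly positive, and this proves the claim. The boundary spheres $\|v\| = k\pi$ have Lebesgue measure zero in $T_\mu\cS^{d-1}$ and can be discarded.

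The main obstacle is this bookkeeping beyond the injectivity radius. Near $r = k\pi$ the Jacobian degenerates, so one must check that the extra contributions are nonnegative and locally integrable rather than merely formally defined. The simplest route is to avoid summing them pointwise. Instead, bound $\Pr(X \in A) \ge \Pr(\sigma\epsilon \in \Log_\mu(A)\cap B_\pi)$ for every Borel $A \subset \cS^{d-1}\setminus\{-\mu\}$. This yields a strictly positive lower density for the absolutely continuous law, which suffices for ``positive density''. If desired, note also that in the regime $\smax \lesssim \pi/\sqrt{d-1}$ the tail mass $\Pr(\sigma\|\epsilon\| \ge \pi)$ is negligible.
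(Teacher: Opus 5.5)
Your proof is correct and follows essentially the paper's route: push the tangent Gaussian forward through $\Exp_\mu$ on the ball of radius $\pi$, get the principal-branch density $\varphi_\sigma(v)\,(r/\sin r)^{d-2}$ from \cref{lem:exp_jac} and change of variables, and read off positivity on $\cS^{d-1}\setminus\{-\mu\}$ from the Gaussian factor. The only difference is the mass with $\sigma\|\epsilon\|\ge\pi$: the paper dismisses it with a tail bound on $\Pr(\sigma\|\epsilon\|\ge\pi)$ in the deployed regime, whereas you observe that these branches add only nonnegative, absolutely continuous mass (your shell decomposition, or $\Pr(X\in A)\ge\Pr(\sigma\epsilon\in\Log_\mu(A))$), which is the actual reason positivity holds for every $\sigma>0$, so your treatment is the tighter one.
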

\begin{proof}
The Gaussian $\sigma\epsilon$ has full support on the column space $T_\mu\cS^{d-1} \cong \mathbb{R}^{d-1}$ of $\Pi_\mu$. The exponential map is a diffeomorphism on the open ball of radius $\pi$ in $T_\mu\cS^{d-1}$ onto $\cS^{d-1} \setminus \{-\mu\}$. By \cref{lem:exp_jac} and the change-of-variables formula, the principal-branch pushforward density is:
$$p_X(y) = \frac{1}{(2\pi\sigma^2)^{(d-1)/2}}\exp\!\left(-\frac{\dg(\mu, y)^2}{2\sigma^2}\right)\left(\frac{\dg(\mu, y)}{\sin(\dg(\mu, y))}\right)^{d-2}, \quad y \in \cS^{d-1}\setminus\{-\mu\}.$$
Smoothness follows because $\dg(\mu,\cdot)^2$ is $C^\infty$ on $\cS^{d-1}\setminus\{-\mu\}$ and $r \mapsto (r/\sin r)^{d-2}$ is real-analytic on $[0,\pi)$. Positivity follows from positivity of the Gaussian factor. Wrap-around contributions through the cut locus are bounded by $P(\sigma\|\epsilon\| \geq \pi) \leq \exp(-\tfrac{1}{2}(\pi/\sigma - \sqrt{d-1})^2)$; in the deployed regime $\smax = 0.01 \ll \pi/\sqrt{d-1}\approx 0.098$ for $d=1024$, this is below $e^{-3\times 10^4}$ and does not affect positivity or smoothness.
\end{proof}

\begin{proof}[Full proof of \cref{prop:full_support}]
Fix $t \in (0,1)$. Then $\sigma(t) = \smax\sin(\pi t) > 0$. By \cref{lem:full_support_cond}, $\operatorname{law}(X_t \mid x_0, x_1)$ has positive smooth density on $\cS^{d-1} \setminus \{-\mu_t(x_0,x_1)\}$ for each $(x_0, x_1)$. The map $(x_0, x_1) \mapsto -\mu_t(x_0, x_1)$ is a smooth submersion on $(\cS^{d-1} \times \cS^{d-1}) \setminus \bar{\Delta}$ (where $\bar{\Delta}$ is the antipodal locus, $\pi$-null under absolutely continuous $\pi$), so for $\mathrm{vol}_g$-a.e.\ $y \in \cS^{d-1}$ the preimage $\{(x_0,x_1) : y = -\mu_t(x_0,x_1)\}$ is a codimension-$(d-1)$ submanifold and hence $\pi$-null. By Fubini, the marginal $\operatorname{law}(X_t) = \int \operatorname{law}(X_t \mid x_0, x_1)\,d\pi$ has positive density a.e.\ and topological support equal to $\cS^{d-1}$.
\end{proof}

\begin{proposition}[Bounded velocity and finite action]
\label{prop:bounded_vel}
Under $\sigma(t) = \smax\sin(\pi t)$: $\|u_t(x_t \mid x_0, x_1)\|_g \leq \smax\pi\|\epsilon\| + \Omega$ for all $t \in (0,1)$, where $\Omega = \dg(x_0, x_1)$.
\end{proposition}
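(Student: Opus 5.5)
The plan is to compute the conditional velocity $u_t(x_t \mid x_0,x_1) = \tfrac{d}{dt}X_t$ directly. Hold $(x_0,x_1,\epsilon)$ fixed and differentiate $X_t = \Exp_{\mu_t}(w_t)$ with $w_t = \sigma(t)\epsilon_t$. The time derivative is then the endpoint value of a Jacobi field along the short geodesic $s \mapsto \Exp_{\mu_t}(s w_t)$, $s\in[0,1]$. On $\cS^{d-1}$ (constant curvature $1$) Jacobi fields are explicit, so we can bound each piece by a coefficient of modulus at most one. This gives $\|u_t\|_g \le \|\dot\mu_t\|_g + \|\tfrac{D}{dt}w_t\|_g$, and the two terms are bounded separately by $\Omega$ and $\smax\pi\|\epsilon\|$.

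\textbf{Step 1 (convention for $\epsilon$).} Since $\epsilon \sim \cN(0,\Pi_{\mu_t})$ lives in a moving tangent space, we must say how it evolves with $t$. We take $\epsilon_t$ to be the parallel transport of a fixed tangent Gaussian along the SLERP geodesic $t\mapsto\mu_t$. This preserves the law $\cN(0,\Pi_{\mu_t})$, since parallel transport is an isometry between tangent spaces, and gives $\tfrac{D}{dt}\epsilon_t = 0$ and $\|\epsilon_t\| = \|\epsilon\|$. Then $\tfrac{D}{dt}w_t = \sigma'(t)\epsilon_t = \smax\pi\cos(\pi t)\,\epsilon_t$, whose norm is at most $\smax\pi\|\epsilon\|$. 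Also $\|\dot\mu_t\|_g = \Omega$, because SLERP (\cref{eq:slerp}) has constant speed $\Omega$.

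\textbf{Step 2 (Jacobi field bound).} Set $r = \|w_t\|$, which is less than $\pi$ in the regime of \cref{lem:full_support_cond}. Let $J$ be the Jacobi field along $\gamma(s)=\Exp_{\mu_t}(s w_t)$ with $J(0)=\dot\mu_t$ and $J'(0)=\tfrac{D}{dt}w_t$; then $\tfrac{d}{dt}X_t = J(1)$. Split $J$ into components parallel and orthogonal to $\dot\gamma$. Curvature $1$ gives
\[
J(1) = P\bigl(J(0)^{\parallel} + J'(0)^{\parallel}\bigr) + P\Bigl(\cos r\,J(0)^{\perp} + \tfrac{\sin r}{r}\,J'(0)^{\perp}\Bigr),
\]
where $P$ denotes parallel transport along $\gamma$. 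Here the parallel component is written in the rescaled parametrization; its affine-in-$s$ form gives the stated coefficients. Because $|\cos r|\le 1$ and $|\sin r/r|\le 1$, and $P$ is an isometry, combining the orthogonal pieces yields $\|J(1)\|\le \|J(0)\| + \|J'(0)\|$. This is the same geometry that underlies \cref{lem:exp_jac}.

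\textbf{Step 3 (combine).} Combining Steps 1 and 2 gives $\|u_t\|_g \le \Omega + \smax\pi\|\epsilon\|$ for all $t\in(0,1)$. Finite action follows by squaring and taking expectations, since $\mathbb{E}\|\epsilon\|^2 = d-1$ and $\Omega\le\pi$.

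The main obstacle is Step 1. If $\epsilon_t$ is instead defined by re-projecting a fixed ambient Gaussian $\xi$ as $\Pi_{\mu_t}\xi$, its covariant derivative is no longer zero: it picks up the extra term $-\langle \mu_t,\xi\rangle\dot\mu_t$. That term breaks the clean bound unless it is absorbed into the constant in front of $\Omega$. So I would state explicitly that the bridge is realized with parallel-transported noise. Otherwise I would fall back to the weaker bound with $\Omega(1+|\langle\mu_t,\xi\rangle|)$ in place of $\Omega$.
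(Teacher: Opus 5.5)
\textbf{Gap: you bound a different vector.} In this paper the conditional velocity is the target field $u_t(x_t\mid x_1)=\Log_{x_t}(x_1)/(1-t)$. It is defined right after the SLERP formula and in the RFM loss, and it is the same expression used as the x-prediction inference velocity. So $\|u_t\|_g=\dg(x_t,x_1)/(1-t)$. It is not the pathwise derivative $\tfrac{d}{dt}X_t$ of the bridge. For $\smax>0$ the two genuinely differ: $\tfrac{d}{dt}X_t$ carries the term $\sigma'(t)\epsilon_t$, which pushes $X_t$ away from $\mu_t$ for $t<1/2$, whereas $\Log_{x_t}(x_1)$ always points at $x_1$. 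Your Jacobi-field computation is correct, and it holds for every $r>0$, not only $r<\pi$. But it ends with a bound on $\|\tfrac{d}{dt}X_t\|_g$, and nothing in the proposal connects this to $\dg(x_t,x_1)$. The ``main obstacle'' in Step~1 is an artifact of this reading: the paper's $u_t$ is evaluated at a single time, so no coupling of the noise across times is needed.

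\textbf{The missing idea} is a triangle inequality through $\mu_t$, which is the paper's whole proof. The geodesic $s\mapsto\Exp_{\mu_t}(s\sigma(t)\epsilon)$, $s\in[0,1]$, joins $\mu_t$ to $x_t$ with length $\sigma(t)\|\epsilon\|$. Hence $\dg(x_t,\mu_t)\le\sigma(t)\|\epsilon\|$ with no injectivity-radius restriction. Also $\dg(\mu_t,x_1)=(1-t)\Omega$ by constant-speed SLERP. Therefore
\[
\frac{\dg(x_t,x_1)}{1-t}\le\frac{\sigma(t)}{1-t}\,\|\epsilon\|+\Omega\le\smax\pi\|\epsilon\|+\Omega,
\]
using $\sin(\pi t)=\sin(\pi(1-t))\le\pi(1-t)$.

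If you prefer to keep your route, you can close the gap by integrating your pathwise bound along an auxiliary curve. Parallel-transport the given $\epsilon$ from $\mu_t$ to $\mu_s$ for $s\in[t,1]$. Since $\sigma(1)=0$, the resulting curve ends at $X_1=x_1$, so
\[
\dg(x_t,x_1)\le\int_t^1\|\tfrac{d}{ds}X_s\|_g\,ds\le(1-t)\bigl(\Omega+\smax\pi\|\epsilon\|\bigr),
\]
which is the claim. This works but is far heavier than necessary. The two routes give the same constant because $\sigma(t)/(1-t)=(\sigma(t)-\sigma(1))/(1-t)\le\sup|\sigma'|=\smax\pi$.
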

\begin{proof}
For $t \in [0,1)$, $\sin(\pi t) = \sin(\pi(1-t)) \leq \pi(1-t)$ (using $\sin x \leq x$ for $x \geq 0$), so $\sigma(t)/(1-t) \leq \smax\pi$. The geodesic distance satisfies $\dg(x_t, \mu_t) \leq \sigma(t)\|\epsilon\|$ unconditionally: when $\sigma(t)\|\epsilon\| < \pi$ (inside the injectivity ball), this holds with equality by the radial isometry $\Exp_{\mu_t}$; otherwise $\dg(x_t,\mu_t) \leq \pi \leq \sigma(t)\|\epsilon\|$ trivially since $\cS^{d-1}$ has diameter $\pi$. The triangle inequality therefore yields, unconditionally:
$$\|u_t\|_g = \frac{\dg(x_t, x_1)}{1-t} \leq \frac{\dg(x_t,\mu_t) + \dg(\mu_t,x_1)}{1-t} \leq \frac{\sigma(t)\|\epsilon\| + (1-t)\Omega}{1-t} \leq \smax\pi\|\epsilon\| + \Omega.$$
Using $\mathbb{E}[\|\epsilon\|^2] = d-1$, $\mathbb{E}\|\epsilon\|\le\sqrt{d-1}$ (Jensen), $\Omega \leq \pi$, and independence of $\Omega$ and $\|\epsilon\|$, the action integral satisfies $\int_0^1 \mathbb{E}[\|v_t\|_g^2]\,dt \leq (\smax\pi\sqrt{d-1} + \pi)^2 < \infty$.
\end{proof}

\begin{proposition}[Approximate transport]
\label{prop:approx_transport}
The bridge-augmented flow satisfies: (i)~boundary consistency ($\sigma(0) = \sigma(1) = 0$); (ii)~velocity perturbation $\|\tilde{v}_t - v_t^\text{det}\|_\infty = O(\smax^2\sin^2(\pi t))$; (iii)~$W_2((\Phi_1^{\tilde{v}})_\# p_0, p_1) = O(\smax^2)$.
\end{proposition}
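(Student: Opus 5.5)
Part~(i) is immediate from \cref{def:bridge}: since $\sigma(t)=\smax\sin(\pi t)$ vanishes at $t\in\{0,1\}$, we get $X_0=\Exp_{\mu_0}(0)=x_0$ and $X_1=x_1$ exactly. Hence the bridge-augmented marginals coincide with the deterministic SLERP marginals at both endpoints.

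For part~(ii), I will compare the marginal velocity of the bridge with that of the deterministic SLERP path. Write the bridge marginal velocity as a conditional expectation,
\[
\tilde v_t(x)=\mathbb{E}\bigl[u_t(X_t\mid x_0,x_1)\mid X_t=x\bigr],
\]
and let $v_t^{\text{det}}$ be the analogous expectation with $\sigma\equiv0$. The key observation is that, conditioned on $(x_0,x_1)$, $X_t$ is a tangent Gaussian perturbation of $\mu_t$ of scale $s:=\sigma(t)$. So the law of $X_t$ is the deterministic law $p_t^{\text{det}}$ convolved with a Riemannian heat-type kernel of variance $s^2$. The tangent noise $\epsilon$ is centered and symmetric, so every odd-order term vanishes. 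A second-order Taylor expansion of $\Exp_{\mu_t}$ and $\Log_{x_t}(x_1)$ in $s\epsilon$ therefore gives
\[
\tilde v_t = v_t^{\text{det}} + \tfrac{s^2}{2}\,\mathcal{A}_t v_t^{\text{det}} + O(s^3),
\]
where $\mathcal{A}_t$ is a second-order operator: a Laplace--Beltrami-type term, a term involving $\nabla\log p_t^{\text{det}}$, and a curvature correction arising from the $(\sin r/r)^{d-2}$ Jacobian in \cref{lem:exp_jac}. Taking sup-norms yields $\|\tilde v_t-v_t^{\text{det}}\|_\infty\le C\,\smax^2\sin^2(\pi t)$, with $C$ depending on $C^2$ norms of $v_t^{\text{det}}$ and $\log p_t^{\text{det}}$.

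Near $t\to1$ the factor $1/(1-t)$ is controlled exactly as in \cref{prop:bounded_vel}: $\sin(\pi t)\le\pi(1-t)$, so $s^2/(1-t)^2\le\smax^2\pi^2$ stays bounded. The wrap-around contributions through the cut locus are exponentially small by the tail estimate in \cref{lem:full_support_cond}.

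For part~(iii), I will use a Gr\"onwall-type stability argument for flows on $\cS^{d-1}$. Couple the two flows from the same initial point $x\sim p_0$ and set $\delta(t)=\dg(\Phi_t^{\tilde v}(x),\Phi_t^{v^{\text{det}}}(x))$. Then
\[
\dot\delta\le L_t\,\delta+\|\tilde v_t-v_t^{\text{det}}\|_\infty,
\]
where $L_t$ is the Lipschitz constant of $v_t^{\text{det}}$, with curvature entering only through bounded comparison terms. Gr\"onwall then gives
\[
\delta(1)\le e^{\int_0^1L_t\,dt}\int_0^1 C\smax^2\sin^2(\pi t)\,dt=O(\smax^2).
\]
This synchronous coupling bounds $W_2$ between the two pushforwards. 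Finally, $(\Phi_1^{v^{\text{det}}})_\#p_0=p_1$ by the standard marginal-preservation property of (Riemannian) flow matching. Together these give $W_2\bigl((\Phi_1^{\tilde v})_\#p_0,p_1\bigr)=O(\smax^2)$.

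The main obstacle is the regularity input in parts~(ii)--(iii). We need $v_t^{\text{det}}$ to be $C^2$ with $\int_0^1 L_t\,dt<\infty$ and bounded second derivatives, but the deterministic SLERP marginal velocity can be singular as $t\to1$, where $p_t^{\text{det}}$ concentrates on the data support. I would handle this by stating it as an explicit regularity assumption (the cross-token $C^2$ assumption flagged in \cref{sec:bridge}). The weighting $\sin^2(\pi t)$ kills the perturbation near the endpoints, so the error integral remains finite even if $L_t$ grows mildly. If needed, I would first run the argument on $[\epsilon_t,1-\epsilon_t]$, where smoothness holds by \cref{thm:rectifiability}(iii), and then let $\epsilon_t\to0$ using part~(i).
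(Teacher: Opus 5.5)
Your route is the paper's. Part (i) comes from $\sigma(0)=\sigma(1)=0$. Part (ii) is a kernel-smoothing bias estimate of order $\sigma(t)^2$ under an assumed $C^2$ regularity of $v_t^{\text{det}}$; you expand by hand where the paper cites on-manifold kernel-regression bias theory. Part (iii) is a synchronous coupling plus Gr\"onwall, combined with $(\Phi_1^{v^{\text{det}}})_\#p_0=p_1$.

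The step that fails is your last one: passing to the full time interval by letting $\epsilon_t\to0$ ``using part~(i)''. Part~(i) only says the bridge \emph{interpolant} is unperturbed at its endpoints. It says nothing about the \emph{flows} $\Phi^{\tilde v}$ and $\Phi^{v^{\text{det}}}$ near $t=1$, where both velocity fields carry the factor $1/(1-t)$. When $p_1$ is concentrated or clustered, which is the case the paper targets, the Lipschitz constants grow at least like $1/(1-t)$. The Gr\"onwall amplification $\exp(\int_s^1 L_r\,dr)$ is then infinite for every $s$. The weight $\sin^2(\pi s)$ damps the source term, not this factor, so it cannot rescue the estimate; and if $L_t$ were integrable, no rescue would be needed. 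Concretely, the basins of $\tilde v$ and $v^{\text{det}}$ generically differ, so trajectories starting near a basin boundary between two data clusters can end in different clusters under the two flows. Hence no pathwise bound $\sup_x \dg(\Phi_1^{\tilde v}(x),\Phi_1^{v^{\text{det}}}(x))=O(\smax^2)$ survives the limit.

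The paper does not attempt this limit. It runs Gr\"onwall only on $[\epsilon_t,1-\epsilon_t]$ and puts the Lipschitz constant on the bridge velocity. \cref{thm:rectifiability}(iii) controls that constant there, because the smoothing bandwidth is at least $\smax\sin(\pi\epsilon_t)>0$. The paper says explicitly that $C$ depends on $\epsilon_t$ through the blow-up as $t\to1$. You should state (iii) in that truncated form.

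The same caveat applies to the time profile in (ii):
\begin{itemize}
\item Your endpoint estimate $s^2/(1-t)^2\le\smax^2\pi^2$ shows the correction stays bounded. It does not show the correction is $O(\smax^2\sin^2(\pi t))$.
\item The first-order term $-s\epsilon$ in the expansion of $\Log_{X_t}(x_1)$ does not vanish under the posterior given $X_t=x$. It contributes a Tweedie-type term of size $\tfrac{s^2}{1-t}\nabla\log p_t$. This term is not of the form $\mathcal{A}_t v_t^{\text{det}}$ and carries an extra $1/(1-t)$.
\end{itemize}
On $[\epsilon_t,1-\epsilon_t]$ all of this is absorbed into an $\epsilon_t$-dependent constant. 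That is also the only sense in which the paper's argument establishes (ii).
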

\begin{proof}
(i)~Immediate from $\sigma(0) = \sigma(1) = 0$.

(ii)~The bridge-augmented velocity $\tilde{v}_t$ is a kernel-smoothed conditional expectation with bandwidth $\sigma(t)$. By on-manifold kernel-regression bias theory for closed Riemannian manifolds~\citep{pelletier2006regression}, the bias is $O(\sigma(t)^2)$ on $[\epsilon_t, 1-\epsilon_t]$ when $v_t^\text{det}$ is $C^2$ (the Riemannian volume Jacobian of $\Exp_{\mu_t}$ is $1 - \tfrac{d-2}{6}r^2 + O(r^4)$ for $r = \|\sigma(t)\epsilon\|$; the $O(r^2)$ term carries a dimensional $(d-2)/6$ factor that we absorb into the $O(\sigma(t)^2)$ bound, leaving leading-order bias unchanged). On a single $\cS^{d-1}$, $C^2$ regularity of $v_t^\text{det}$ follows from $C^\infty$ regularity of the OT map (Ma--Trudinger--Wang theory~\citep{loeper2009regularity,ma2005regularity}) propagated through $v_t^\text{det} = \Log_x(x_1)/(1-t)$, bounded on $[\epsilon_t, 1-\epsilon_t]$. The cross-token regularity assumption for the product manifold $(\cS^{d-1})^N$ is surfaced in \cref{sec:bridge}.

(iii)~On $[\epsilon_t, 1-\epsilon_t]$, the marginal velocity $v_t$ is uniformly Lipschitz with constant $L = L(N, d, \Omega_\text{max}, \smax, \epsilon_t)$. By Gr\"onwall's inequality, $\sup_x \dg(\Phi_1^{\tilde{v}}(x), \Phi_1^{v^\text{det}}(x)) \leq C\smax^2$ with $C$ depending on the same parameters; coupling $X \sim p_0$ through both flows gives a valid coupling of $(\Phi_1^{\tilde{v}})_\# p_0$ and $(\Phi_1^{v^\text{det}})_\# p_0 = p_1$, so $W_2^2((\Phi_1^{\tilde{v}})_\# p_0, p_1) \leq \mathbb{E}_{x \sim p_0}[\dg(\Phi_1^{\tilde{v}}(x), \Phi_1^{v^\text{det}}(x))^2] \leq C^2 \smax^4$, giving $W_2 \leq C\smax^2$. We note that $C$ depends on $\epsilon_t$ via the velocity blow-up at $t \to 1$; the asymptotic $O(\smax^2)$ rate is honest, but the absolute constant at deployed $\epsilon_t$ is large and the qualitative smallness of the transport perturbation is corroborated empirically (\cref{tab:ablation}, isotropic-decoder rows).
\end{proof}

\begin{corollary}[Bridge resolution of the disconnected-support obstruction]
\label{cor:disconnected}
The bridge perturbation ensures $\supp(\operatorname{law}(X_t)) = \cS^{d-1}$ for all $t \in (0,1)$ regardless of the coupling $\pi$ or the structure of $p_1$. Consequently, no rectification iterate can become a non-optimal Hertrich-type fixed point~\citep[Prop.~10]{hertrich2025rectified} exhibiting the disconnected-support obstruction.
\end{corollary}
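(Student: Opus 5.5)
The corollary has two parts. The first part follows almost directly from \cref{prop:full_support}. The second part reduces to showing that the Hertrich-type obstruction needs a disconnected intermediate support.

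\textbf{Step 1: full support for any coupling.} Fix $t\in(0,1)$ and any coupling $\pi$ of $(p_0,p_1)$; no absolute continuity of $p_1$ is needed. Condition on $(x_0,x_1)$. By \cref{lem:full_support_cond} with $\sigma=\sigma(t)>0$ and $\mu=\mu_t(x_0,x_1)$, the conditional law of $X_t$ has positive density on $\cS^{d-1}\setminus\{-\mu_t\}$. Hence it charges every nonempty open set $U\subseteq\cS^{d-1}$: removing one point from $U$ leaves a nonempty open set of positive volume.

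The marginal is the mixture $\operatorname{law}(X_t)=\int \operatorname{law}(X_t\mid x_0,x_1)\,d\pi$. Therefore $P(X_t\in U)=\int P(X_t\in U\mid x_0,x_1)\,d\pi>0$, since the integrand is strictly positive everywhere. Every open set is charged, so $\supp(\operatorname{law}(X_t))=\cS^{d-1}$.

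This argument does not need the submersion/Fubini step used in the proof of \cref{prop:full_support}, which relied on $\pi$ being absolutely continuous. That is exactly why the conclusion holds ``regardless of the coupling $\pi$ or the structure of $p_1$''. Clustered data, or $p_1$ supported on a lower-dimensional set, is harmless.

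\textbf{Step 2: apply Step 1 to every rectification iterate.} Each rectification step produces a new coupling $\pi^{(k)}$ of $(p_0,p_1)$. Training with the bridge on $\pi^{(k)}$ uses intermediate marginals $\operatorname{law}(X_t^{(k)})$ built from $\pi^{(k)}$ through \cref{def:bridge}. Step 1 applies to each $\pi^{(k)}$ uniformly, so every intermediate marginal of every iterate has full, hence connected, support on $\cS^{d-1}$ for $t\in(0,1)$. This uses $d\ge 2$, so that the sphere is connected.

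\textbf{Step 3: rule out the obstruction.} By definition, the counterexample fixed points of \citep[Prop.~10]{hertrich2025rectified} have intermediate marginals with disconnected support. Step 2 shows that no bridge iterate can have such marginals, so none can exhibit the obstruction.

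This is the main obstacle, and it is interpretive rather than technical. Hertrich's construction is Euclidean, and the corollary's claim is only that the specific mechanism (disconnected intermediate support) is excluded. It does not claim that the fixed point is OT, since the Riemannian extension is treated as an assumption in \cref{app:rect_theory}. I would state this scope explicitly and, if desired, cite \cref{thm:rectifiability}(ii) to note that the conditional densities are smooth. Under the assumed extension of Hertrich's Thm.~14, that smoothness supplies the positive-density premise.
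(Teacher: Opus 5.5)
Your proof is correct, and your Steps 2--3 match the paper's two-sentence argument: full support holds for every coupling, hence for every rectification iterate, hence no iterate has the disconnected intermediate support on which the Hertrich counterexample rests.

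The difference is in Step 1. The paper cites \cref{prop:full_support}, whose proof uses a submersion/Fubini argument. That argument explicitly requires the antipodal locus and the preimages $\{-\mu_t = y\}$ to be $\pi$-null ``under absolutely continuous $\pi$''. You instead observe that each conditional law $\operatorname{law}(X_t\mid x_0,x_1)$ charges every nonempty open set by \cref{lem:full_support_cond}, so every $\pi$-mixture does as well.

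Your route is more elementary, and it is the one that actually earns the clause ``regardless of the coupling $\pi$''. Rectification iterates are deterministic couplings $(X_0,\Phi_1(X_0))$ concentrated on a graph, hence singular with respect to $\mathrm{vol}_g\otimes\mathrm{vol}_g$. The cited proof therefore does not literally apply to them, while yours does. Where the paper's route does apply, it gives something stronger: a.e.\ positivity of the density of $\operatorname{law}(X_t)$, which is the positive denominator used in \cref{thm:rectifiability}(ii). The corollary, however, needs only topological support.

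Two minor tightenings. First, Step 1 never uses the marginals of $\pi$, so state it for an arbitrary probability measure on $\cS^{d-1}\times\cS^{d-1}$. This matters because, with the conditional velocity $\Log_{x_t}(x_1)/(1-t)$, the bridge flow only approximately pushes $p_0$ to $p_1$ (\cref{prop:approx_transport}(iii)). The iterates in your Step 2 therefore need not be exact couplings of $(p_0,p_1)$. Second, an arbitrary coupling may charge antipodal pairs $x_0=-x_1$, where \cref{eq:slerp} is undefined. This is harmless, because \cref{lem:full_support_cond} holds for every choice of $\mu$.
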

\begin{proof}
By \cref{prop:full_support}, the bridge marginal $\operatorname{law}(X_t)$ has full topological support on $\cS^{d-1}$ for every $t \in (0,1)$ and every coupling $\pi$, including all rectification iterates. Hence no fixed point of the bridge--rectification map exhibits the disconnected-support pathology underlying the Hertrich counterexample.
\end{proof}

\begin{proof}[Full proof of \cref{thm:rectifiability}]
(i)~\cref{prop:full_support}.

(ii)~By Bayes' rule (with respect to the Riemannian volume measure on $\cS^{d-1}$ as the dominating reference): $p(x_0, x_1 \mid X_t = x) \propto p(X_t = x \mid x_0, x_1)\,d\pi$, where the conditional density $p(X_t = x \mid x_0, x_1)$ is given by \cref{lem:full_support_cond} on the cut-locus complement (a full-measure subset of $\cS^{d-1}$ for $d \geq 3$). The numerator is smooth on this domain (Gaussian pushforward through the exponential map). The denominator is positive (part~(i)) and smooth on the cut-locus complement, using the per-direction Gaussian decay of \cref{lem:full_support_cond} as the integrable dominating function for differentiation under the integral. We treat this as the Riemannian extension of \citet[Thm.~14]{hertrich2025rectified}; the per-token regularity follows from MTW theory~\citep{loeper2009regularity} and the Riemannian-flow framework of \citet{chen2024riemannian, kumar2026manifold}, with the cross-token extension surfaced in \cref{sec:bridge} as an assumption.

(iii)~On $[\epsilon_t, 1-\epsilon_t]$: $v_t(x) = \mathbb{E}[u_t \mid X_t = x]$ is smooth by differentiation under the integral (using the Gaussian dominating function from (ii) and the bounded $u_t$ from \cref{prop:bounded_vel}). Differentiating once more, $\|\nabla_x v_t\|_\infty \leq L$ with $L = L(d, \Omega_\text{max}, \smax, \epsilon_t)$ finite on the closed interval (the kernel-derivative bound scales as $\sigma(t)^{-1}$, which is bounded below by $\smax\sin(\pi\epsilon_t) > 0$). The Picard--Lindel\"of theorem for time-dependent Lipschitz vector fields on a compact Riemannian manifold~\citep{lee2018riemannian} then gives existence and uniqueness of the IVP $\dot x = v_t(x), x(t_0) = x_0$ on $[\epsilon_t, 1-\epsilon_t]$ for any $t_0$ in the same interval.
\end{proof}

\section{Conditioning Collapse Analysis}
\label{app:conditioning}

We provide the full conditioning collapse analysis summarized in \cref{sec:conditioning}.

\subsection{Entropy Estimation via Vendi Score}

For VFM-conditioned models, the output diversity decomposes as $H(X) = H(X \mid C) + I(X; C)$, where $C = E_\text{VFM}(I)$ is the conditioning embedding. We estimate these quantities using the Vendi Score of order 1~\citep{friedman2023vendi}, which computes $\text{Vendi}_1(K) = \exp(H_1(\lambda(K)))$ where $\lambda(K)$ are the normalized eigenvalues of the similarity kernel matrix $K$ and $H_1$ is Shannon entropy. For a set of samples $\{X_j\}$, $\log \text{Vendi}_1$ estimates the Shannon entropy of the kernel-induced output distribution.

\textbf{Key property.} Because we use a deterministic sampler (PLMS with $\eta = 0$), the only source of randomness is the initial latent noise $z_T \sim \cN(0, I)$. The entropy we estimate is therefore model-induced diversity---the pushforward entropy of $\cN(0, I)$ through $f_\theta(\cdot, c)$---not sampler noise. This avoids step-wise noise inflation and confounding stochasticity from the sampling procedure.

\textbf{Important caveats.} The quantities we estimate are \emph{not}:
(i)~the true data entropy $H(p_\text{data}(X \mid C))$;
(ii)~the training objective entropy; or
(iii)~Shannon mutual information $I(X; C)$ in pixel space.
Rather, we estimate the \emph{conditional entropy of the generative model's output distribution}, under a fixed inference procedure, measured in VFM (UNI) embedding space. This is a valid and informative diagnostic because UNI embeddings are approximately information-preserving for tissue-level semantics---the relevant notion of diversity in histopathology. The difference $I(X; C) = H(X) - H(X \mid C)$ is a kernel-induced Shannon mutual information, which is monotonic in diversity and comparable across models evaluated with the same kernel and embedding space.

\subsection{Measurement Protocol}

\textbf{Step 1: Clustering.} We apply spherical $k$-means on TCGA-BRCA UNI embeddings (${\sim}500$k samples) to obtain conditioning clusters. We evaluate $K \in \{5, 8, 10, 12, 15, 20, 30\}$, selecting $K^* = 10$ via the elbow method with Adjusted Rand Index (ARI) stability $\geq 0.90$ across bootstrap resamples. Cluster weights $w_k = |C_k| / N_\text{total}$ reflect the empirical conditioning distribution.

\textbf{Step 2: Conditional entropy $H(X \mid C)$.} For each cluster $k$, we sample conditioning vectors $c_{k,m}$ and generate multiple images per condition by varying $z_T$. The per-condition Vendi Score estimates $H(X \mid C = c_{k,m})$. Averaging over conditions with cluster weights gives $\hat{H}(X \mid C) = \sum_k w_k \cdot \mathbb{E}_{m}[H(X \mid C = c_{k,m})]$.

\textbf{Step 3: Marginal entropy $H(X)$.} We pool all generated samples across all conditions and compute a single Vendi Score on the full set. This approximates $H(X)$ via Monte Carlo marginalization over $C$.

\textbf{Step 4: Mutual information.} $I(X; C) = H(X) - H(X \mid C)$, normalized as $\rho_\text{cond} = I(X; C) / H(X)$.

\textbf{Step 5: Inter-cluster spectral diversity.} We compute SPEC-diff~\citep{jalali2025specdiff}---the operator norm of the difference between per-cluster and overall kernel matrices---to quantify inter-cluster spectral diversity. Low CVS combined with low SPEC-diff is a signature of conditioning collapse.

\subsection{Linear Probing: SPIDER-breast Dataset}
\label{app:linprob}

The linear probing evaluation uses SPIDER-breast~\citep{nechaev2025spider} with an 11-class binary split into Benign/Proliferative (6 classes, 26{,}664 patches) and Malignant (5 classes, 27{,}164 patches); class-level breakdown in \cref{tab:spider_classes}. A linear classifier is trained on frozen embeddings from each model/encoder; AUROC with 95\% bootstrap CI is reported in \cref{tab:linprob_results}.

\begin{table}[!h]
  \begin{minipage}[t]{0.49\linewidth}
    \centering
    \caption{SPIDER-breast class composition.}
    \label{tab:spider_classes}
    \footnotesize
    \setlength{\tabcolsep}{4pt}
    \begin{tabular}{@{}l r@{}}
      \toprule
      Class & Patches \\
      \midrule
      \multicolumn{2}{@{}l}{\textit{Benign/Proliferative (6 classes)}} \\
      Adenosis                       & 2{,}899  \\
      Sclerosing adenosis            & 3{,}423  \\
      Fibrocystic changes            & 5{,}027  \\
      Typical ductal hyperplasia     & 5{,}546  \\
      Fibroadenoma                   & 5{,}243  \\
      Benign phyllodes               & 4{,}526  \\
      \emph{Subtotal}                & \emph{26{,}664} \\
      \midrule
      \multicolumn{2}{@{}l}{\textit{Malignant (5 classes)}} \\
      DCIS low-grade                 & 5{,}017  \\
      DCIS high-grade                & 5{,}632  \\
      Invasive NST carcinoma         & 6{,}142  \\
      Lobular invasive carcinoma     & 5{,}102  \\
      Malignant phyllodes            & 5{,}271  \\
      \emph{Subtotal}                & \emph{27{,}164} \\
      \midrule
      \textbf{Total}                 & \textbf{53{,}828} \\
      \bottomrule
    \end{tabular}
  \end{minipage}\hfill
  \begin{minipage}[t]{0.49\linewidth}
    \centering
    \caption{Linear-probe AUROC (95\% bootstrap CI) on SPIDER-breast.}
    \label{tab:linprob_results}
    \footnotesize
    \setlength{\tabcolsep}{4pt}
    \begin{tabular}{@{}l c c@{}}
      \toprule
      Encoder & $d$ & AUROC [95\% CI] \\
      \midrule
      \textbf{UNI2-h}            & 1536 & \textbf{0.997} [0.996, 0.998] \\
      UNI                        & 1024 & 0.995 [0.993, 0.996]          \\
      DINOv3-L                   & 1024 & 0.937 [0.931, 0.942]          \\
      \midrule
      PixCell VAE                & 16   & 0.640 [0.628, 0.653]          \\
      ZoomLDM VAE                & 3    & 0.624 [0.611, 0.637]          \\
      \bottomrule
    \end{tabular}
  \end{minipage}
\end{table}

\subsection{Intrinsic Geometry: Extended Analysis}
\label{app:intrinsic_geometry}

\begin{figure}[t]
  \centering
  \begin{minipage}[t]{0.48\linewidth}
    \centering
    \includegraphics[width=\linewidth]{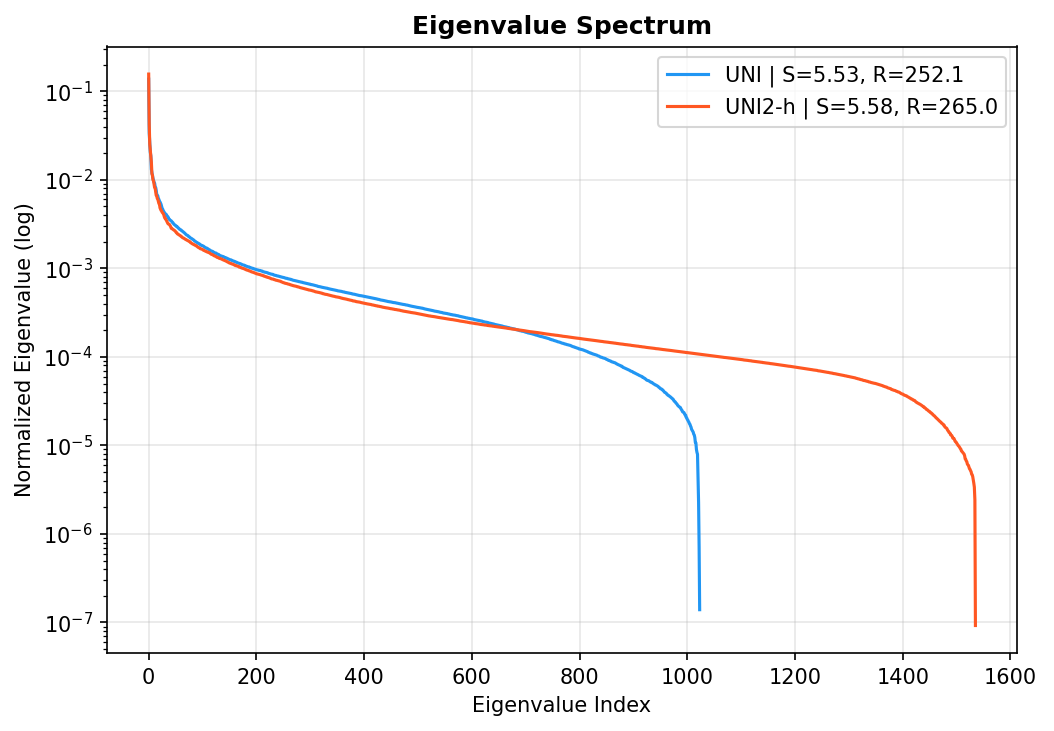}
    {\small (a) Spectral analysis}
  \end{minipage}
  \hfill
  \begin{minipage}[t]{0.48\linewidth}
    \centering
    \includegraphics[width=\linewidth]{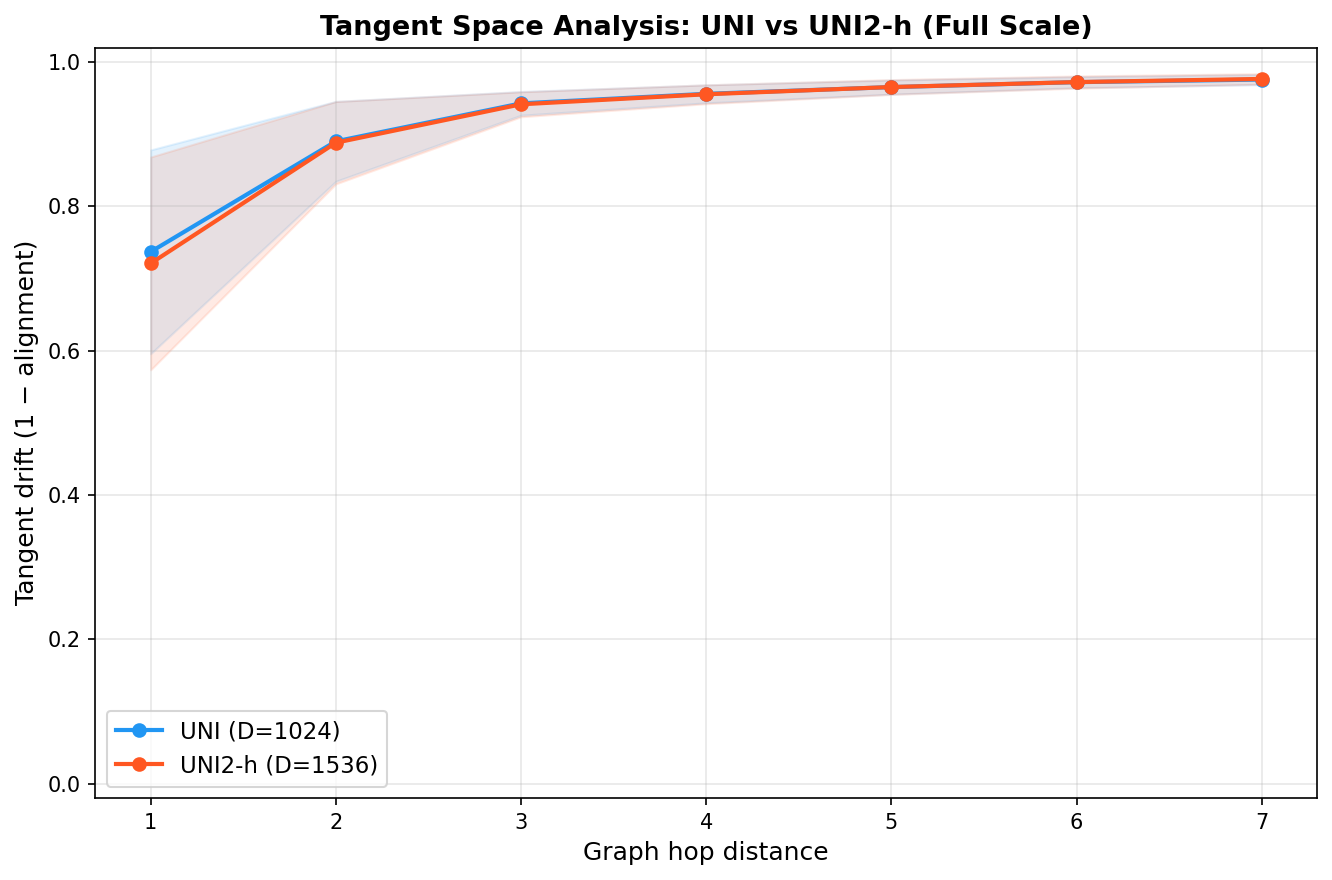}
    {\small (b) Tangent drift}
  \end{minipage}
  \caption{Intrinsic geometry of histopathology VFM feature manifolds on TCGA-BRCA. (a)~Eigenvalue spectrum of the $\ell_2$-normalized feature covariance for UNI and UNI2-h, showing high effective rank ($R_\text{eff} = 252$/$265$). (b)~Tangent drift $\mathcal{D}$ vs.\ graph-hop distance, demonstrating monotonically increasing misalignment of local tangent spaces---direct evidence of a curved, non-flat manifold structure.}
  \label{fig:intrinsic_geometry}
\end{figure}

We extend the geometric characterization of \citet{xiong2026exploiting} to TCGA-BRCA at scale (${\sim}1$M 20$\times$ patches) for UNI ($d{=}1024$) and UNI2-h ($d{=}1536$). Spectral analysis of the $\ell_2$-normalized feature covariance reveals high effective rank---$R_\mathrm{eff} = 252$ and $265$ respectively, with von Neumann entropy $S_1 = 5.53$ and $5.58$---consistent with \citeauthor{xiong2026exploiting}'s result of $R_\mathrm{eff} = 235.5$ for UNI on Camelyon16, confirming that histopathology-specific self-supervised pretraining distributes semantic information across hundreds of dimensions. Tangent space analysis ($k{=}12$, intrinsic dimension $d_s{=}10$) shows drift rising steeply from $0.737{\pm}0.141$/$0.721{\pm}0.147$ at hop~1 to $0.943{\pm}0.016$/$0.941{\pm}0.017$ at hop~3, saturating near $0.976$ at hop~7. The sharp collapse in standard deviation beyond hop~3 ($\sigma < 0.02$) demonstrates stable, data-wide manifold curvature---substantially exceeding values reported for CONCH~\citep{lu2024conch} by \citet{xiong2026exploiting}.

\section{Additional Technical Details}
\label{app:technical}

\subsection{Noise Schedule Choice}
\label{app:noise_schedule}

The KL expansion of the Brownian bridge yielding $\sin(n\pi t)$ modes (with eigenvalues $\lambda_n = 1/(n\pi)^2$) and the rationale for using only the $n=1$ mode are stated in \cref{sec:bridge}; here we contrast the resulting bridge perturbation with VE/VP alternatives.

The bridge-type perturbation is chosen over VE/VP alternatives for three reasons:
\begin{enumerate}[leftmargin=*,topsep=2pt,itemsep=2pt,label=(\roman*)]
    \item \emph{Exact boundary conditions}: $\sigma(0) = \sigma(1) = 0$ ensures $X_0 = x_0$ and $X_1 = x_1$ exactly; VE-type processes do not naturally satisfy $\sigma(1) = 0$.
    \item \emph{Intrinsic to $\cS^{d-1}$}: VP-type linear interpolation $(1-\beta)x_0 + \beta x_1 + \sigma\epsilon$ leaves the sphere; our construction uses the exponential map, keeping all intermediate points on $\cS^{d-1}$.
    \item \emph{Controlled noise budget}: $\int_0^1 \sigma(t)^2\,dt = \smax^2/2$ directly controls transport error via \cref{prop:approx_transport}.
\end{enumerate}

\subsection{x-Prediction Formulation and Chord-vs-Geodesic Loss}
\label{app:x_prediction}

The conditional velocity contains $\sigma'(t)/\sigma(t) = \pi\cos(\pi t)/\sin(\pi t)$, which diverges at $t = 0$ and $t = 1$. The x-prediction formulation~\citep{li2025backtobasics} avoids this: the network $f_\theta(x_t, t)$ predicts the data endpoint with normalized output $\hat{x}_1 = f_\theta(x_t, t)/\|f_\theta(x_t, t)\|$, and at inference the velocity is recovered as $v_\theta(x_t, t) = \Log_{x_t}(\hat{x}_1)/(1-t)$. The $\sin(\pi t)/(1-t) \to \pi$ cancellation (\cref{prop:bounded_vel}) ensures this remains bounded as $t \to 1$. Time $t$ is sampled from a logit-normal distribution clamped to $[\epsilon_t, 1-\epsilon_t]$ with $\epsilon_t = 10^{-5}$, following the time-shift formulation of \citet{esser2024scaling}; both x-prediction and the logit-normal schedule are inherited from pixel-space flow matching without Riemannian-regime ablation.

\paragraph{Chord vs.\ geodesic loss (full derivation).}
The natural Riemannian (geodesic) training loss measures velocity error in the tangent space via the logarithmic map:
\begin{equation}
  \cL_\text{geo}(\theta) = \mathbb{E}_{t,\,(x_0,x_1) \sim \pi,\,\epsilon}\!\left[\frac{\|\Log_{x_t}(\hat{x}_1) - \Log_{x_t}(x_1)\|_g^2}{(1-t)^2}\right].
  \label{eq:vloss_geo}
\end{equation}
However, the Log map involves $\arccos$ (cf.\ \cref{eq:log_map}), whose Jacobian diverges as $\theta = \arccos(\langle x_t, \hat{x}_1\rangle) \to 0$, i.e., when the prediction collapses toward the current point $x_t$ --- a regime that becomes unavoidable near $t = 1$ where $x_t \approx x_1$. This creates a poorly conditioned gradient landscape. The chord loss \cref{eq:vloss_chord} replaces the tangent-space distance with the ambient $\ell_2$ distance between normalized predictions and shares the same population minimizer ($\hat{x}_1 = x_1$) and the same inference velocity $v_\theta(x_t, t) = \Log_{x_t}(\hat{x}_1)/(1-t)$; only the training gradient landscape changes, avoiding the $\arccos$ singularity.

\subsection{Stage 1 and Stage 2 Training Algorithms}
\label{app:training_alg}

\Cref{alg:training} summarizes Stage 1 training of the LightningDiT-XL generator $f_\theta$ on bridge-perturbed VFM embeddings (\cref{def:bridge}) with the chord loss \cref{eq:vloss_chord}. For each minibatch, source latents $\bZ_0$ are sampled from the uniform distribution on $(\cS^{d-1})^N$, paired with data latents $\bZ_1$ from the encoder; per-token bridge interpolants $z_t^{(n)} = \Exp_{\mu_t^{(n)}}(\smax\sin(\pi t)\,\epsilon^{(n)})$ are formed with $\epsilon^{(n)} \sim \cN(0, \Pi_{\mu_t^{(n)}})$; and the network's normalized prediction is regressed against the data endpoint with the $1/(1-t)^2$ weighting absorbed into the chord loss.

\begin{algorithm}[t]
\caption{STREAM Stage 1 DiT Training}
\label{alg:training}
\KwIn{Data features $\{\bZ_{1,j}\}_{j=1}^B \sim p_\text{data}$ on $(\cS^{d-1})^N$, noise level $\smax$}
\For{each minibatch}{
  Sample $\bZ_{0,j} \sim \text{Uniform}((\cS^{d-1})^N)$ for $j = 1, \ldots, B$\;
  Sample $t \sim \text{Logit-Normal}(-0.5, 1)$, clamp to $[\epsilon_t, 1-\epsilon_t]$\;
  \For{each token $n = 1, \ldots, N$}{
    $\mu_t^{(n)} \gets \SLERP(z_0^{(n)}, z_1^{(n)}, t)$\;
    Sample $\epsilon^{(n)} \sim \cN(0, \Pi_{\mu_t^{(n)}})$\;
    $z_t^{(n)} \gets \Exp_{\mu_t^{(n)}}(\smax \sin(\pi t) \cdot \epsilon^{(n)})$\;
  }
  Predict $\hat{\bZ}_1 = f_\theta(\bZ_t, t)$; normalize each token: $\hat{z}_1^{(n)} \gets \hat{z}_1^{(n)} / \|\hat{z}_1^{(n)}\|$\;
  $\cL \gets \frac{1}{N(1-t)^2} \sum_{n=1}^N \|\hat{z}_1^{(n)} - z_1^{(n)}\|^2$ \tcp*{chord loss}
  Update $\theta$ via gradient descent on $\cL$\;
}
\end{algorithm}

\Cref{alg:decoder_training} summarizes Stage 2 decoder training. The trained DiT $f_\theta$ and frozen encoder $E$ are reused; the precomputed centroid bases $\{U_H(\bar{z}_c), U_L(\bar{z}_c)\}_{c=1}^K$ from the SVD of the velocity-field Jacobian (\cref{sec:aniso}) determine the anisotropic noise covariance for each token. Each minibatch encodes images, perturbs each token with anisotropic noise drawn from its assigned centroid's covariance, projects back onto $\cS^{d-1}$, and decodes; the 4-component loss \cref{eq:decoder_loss} combines pixel-space $\ell_1$, LPIPS, a cosine round-trip term targeting the \emph{clean} latent (forcing the decoder to denoise), and an adaptive adversarial term.

\begin{algorithm}[t]
\caption{STREAM Stage 2 Decoder Training}
\label{alg:decoder_training}
\KwIn{Frozen DiT $f_\theta$, frozen encoder $E$, centroids $\{\bar{z}_c\}_{c=1}^K$ and bases $\{U_H(\bar{z}_c), U_L(\bar{z}_c)\}_{c=1}^K$, noise scales $\sigma_H, \sigma_L$, loss weights $(\lambda_{\ell_1}, \lambda_\text{LPIPS}, \lambda_\text{cos}, \omega_G)$}
\For{each minibatch of images $\{I_j\}_{j=1}^B$}{
  Encode: $\bZ_j \gets E(I_j) \in (\cS^{d-1})^N$ \tcp*{frozen}
  \For{each token $n = 1, \ldots, N$}{
    $c^* \gets \arg\min_c \|z^{(n)} - \bar{z}_c\|$ \tcp*{nearest centroid}
    $\Sigma^{(n)} \gets \sigma_H^2\, U_H(\bar{z}_{c^*})U_H(\bar{z}_{c^*})^\top + \sigma_L^2\, U_L(\bar{z}_{c^*})U_L(\bar{z}_{c^*})^\top$\;
    Sample $n^{(n)} \sim \cN(0, \Sigma^{(n)})$\;
    $\tilde{z}^{(n)} \gets (z^{(n)} + n^{(n)}) / \|z^{(n)} + n^{(n)}\|$ \tcp*{retract to $\cS^{d-1}$}
  }
  Decode: $\hat{I}_j \gets D_\phi(\tilde{\bZ}_j)$\;
  Round-trip: $z_{\text{rt},j} \gets E(\hat{I}_j)$ \tcp*{frozen encoder}
  $\cL_\text{dec} \gets \lambda_{\ell_1} \|\hat{I}_j - I_j\|_1 + \lambda_\text{LPIPS}\,\text{LPIPS}(\hat{I}_j, I_j) + \lambda_\text{cos}\,(1 - \langle z_{\text{rt},j}, z_j \rangle) + \omega_G\,\lambda_\text{adapt}\,\cL_\text{adv}$\;
  Update $\phi$ via gradient descent on $\cL_\text{dec}$\;
}
\end{algorithm}

\subsection{Integration Approximation and Generation Algorithm}
\label{app:integration}

\begin{algorithm}[t]
\caption{STREAM Generation}
\label{alg:generation}
\KwIn{Number of steps $n_\text{steps} = 25$, decoder $D_\phi$}
Sample $\bZ_0 \sim \text{Uniform}((\cS^{d-1})^N)$\;
$dt \gets 1 / n_\text{steps}$\;
\For{$i = 0, \ldots, n_\text{steps} - 1$}{
  $t \gets i / n_\text{steps}$\;
  $\hat{\bZ}_1 \gets f_\theta(\bZ, t)$; normalize each token\;
  \For{each token $n$}{
    $v^{(n)} \gets \Log_{z^{(n)}}(\hat{z}_1^{(n)}) / (1-t)$ \tcp*{velocity at $t$}
    $z_{1/2}^{(n)} \gets (z^{(n)} + (dt/2) \cdot v^{(n)}) / \|\cdot\|$ \tcp*{half-step retraction}
  }
  $\hat{\bZ}_1^{(1/2)} \gets f_\theta(\bZ_{1/2}, t + dt/2)$; normalize each token \tcp*{fresh DiT call}
  \For{each token $n$}{
    $v_\text{mid}^{(n)} \gets \Log_{z_{1/2}^{(n)}}(\hat{z}_1^{(1/2),(n)}) / (1 - t - dt/2)$\;
    $z^{(n)} \gets (z^{(n)} + dt \cdot v_\text{mid}^{(n)}) / \|\cdot\|$ \tcp*{full step with midpoint velocity}
  }
}
\Return $D_\phi(\bZ)$\;
\end{algorithm}

At inference, we use per-token projected Euler integration $(z^{(n)} + v^{(n)} \cdot dt)/\|z^{(n)} + v^{(n)} \cdot dt\|$ rather than the exact Riemannian exponential map. Projection must be applied per-token (onto $\cS^{d-1}$ for each token $n$), not globally (which would project onto $\cS^{Nd-1}$ and destroy the product structure). This is a first-order retraction with per-step error $O(\|v^{(n)}\|^2 dt^2)$.

For faster generation, $n_{\text{steps}} = 25$ midpoint Euler steps (second-order Runge--Kutta with per-token projection) achieve comparable accuracy at reduced cost. Each step (a) advances by a half-step $z_{1/2}^{(n)} \gets (z^{(n)} + (dt/2)\,v^{(n)}_t)/\|\cdot\|$, (b) evaluates the midpoint velocity $v_\text{mid}^{(n)} = \Log_{z_{1/2}^{(n)}}(\hat z_1^{(1/2)})/(1 - t - dt/2)$ from a fresh network call at $t + dt/2$, then (c) takes a full step $z^{(n)} \leftarrow (z^{(n)} + dt\,v_\text{mid}^{(n)})/\|\cdot\|$. This achieves $O(dt^3)$ per-step error (global $O(dt^2)$) vs.\ $O(dt^2)$ per-step ($O(dt)$ global) for projected Euler, so fewer steps suffice for matched accuracy. \Cref{alg:generation} above details the full generation routine.

\section{STREAM Training Hyperparameters}
\label{app:implementation}

\paragraph{Architectures.}
The DiT generator uses LightningDiT-XL~\citep{yao2025vavae}: 28 layers, hidden $1152$, 16 heads, SwiGLU FFN, RMSNorm, RoPE positional embeddings, AdaLN-Zero modulation (time-only), $\sim\!676$M parameters; per-token $\ell_2$-normalized output for x-prediction on $\cS^{d-1}$. The decoder uses the ViT-XL decoder of RAE~\citep{zheng2025rae} (built on the ViT architecture~\citep{dosovitskiy2021image}) with Tanh output mapped to $[-1,1]$. The encoder is frozen UNI ViT-L/16~\citep{chen2024uni} (ViT~\citep{dosovitskiy2021image}) producing 256 $\ell_2$-normalized tokens of dimension 1024 from $256\!\times\!256$ patches. The discriminator is Kaiko ViT-S/8~\citep{kaiko2024towards} --- a histopathology-specific Vision Transformer pretrained on large-scale tissue patches --- with frozen backbone and trainable spectrally-normalized projection heads, so the adversarial signal is computed in a pathology-aware feature space rather than a natural-image one.

\paragraph{Fair-comparison protocol.}
Baselines vary in batch size and reported step count. To equalize compute, we trained every baseline (and STREAM) for the ImageNet-equivalent of 80 epochs, adjusting the total number of optimization steps to match the dataset $\times$ batch-size product accordingly. All hyperparameters other than schedule length follow each method's published configuration.

\begin{table}[t]
  \caption{STREAM training hyperparameters.}
  \label{tab:hparams}
  \centering
  \small
  \setlength{\tabcolsep}{4pt}
  \begin{tabular}{@{}p{0.20\linewidth} p{0.36\linewidth} p{0.40\linewidth}@{}}
    \toprule
    Setting & Stage 1 (DiT) & Stage 2 (Decoder) \\
    \midrule
    Architecture & LightningDiT-XL & ViT-XL decoder + Kaiko disc.\ \\
    Parameters & $\sim\!676$M & ViT-XL + frozen UNI + frozen Kaiko \\
    Loss & Chord loss \cref{eq:vloss_chord} & $\ell_1$ + LPIPS + cos round-trip + GAN \\
    Loss weights & --- & $\lambda_{\ell_1}{=}1.0,\ \lambda_\text{LPIPS}{=}0.5,\ \lambda_\text{cos}{=}0.5,\ \omega_G{=}0.75$ \\
    Time sampling & Logit-Normal$(-0.5,1)$ on $[\epsilon_t,1{-}\epsilon_t]$, $\epsilon_t{=}10^{-5}$ & --- \\
    Bridge $\smax$ & $0.01$ & --- \\
    Noise injection & --- & Anisotropic SVD (\cref{eq:aniso_cov}): $\sigma_H{=}0.002$, $\sigma_L{=}0.02$, $\tau{=}0.90$ \\
    Optimizer & AdamW, $\beta{=}(0.9,0.95)$ & AdamW, $\beta{=}(0.9,0.95)$, WD$=0.05$ \\
    Learning rate & $4\!\times\!10^{-4}$, constant & $2\!\times\!10^{-4}\!\to\!2\!\times\!10^{-5}$ cosine \\
    Warmup & 5000 steps & 3000 steps \\
    Schedule & --- & 3-phase: A $(0\text{--}17.5\%)$ recon; B $(17.5\text{--}20\%)$ disc; C $(20\text{--}100\%)$ full \\
    Steps & 95K & 90K \\
    Batch size & 1024 & 512 \\
    Grad clip & $5.0$ & --- \\
    EMA decay & $0.9999$ & $0.9978$ \\
    Generation & 25 midpoint Euler steps & --- \\
    \bottomrule
  \end{tabular}
\end{table}

The cosine round-trip loss in Stage 2 targets the \emph{clean} features $z$ (not the noisy input $\tilde{z}$), so the decoder implicitly learns to denoise. The construction $\tilde{z} = (z + n)/\|z + n\|$ with $n \sim \cN(0, \Sigma_\text{noise}(z))$ adds noise in the ambient space and then re-projects onto $\cS^{d-1}$ per token; this approximates a tangent-Gaussian perturbation $\Exp_z(\eta)$ with $\eta \in T_z \cS^{d-1}$ to first order in $\|n\|$. Adversarial adaptive weight: $\lambda_\text{adapt} = \|\nabla_\text{last}\cL_\text{rec}\| / (\|\nabla_\text{last}\cL_\text{adv}\| + 10^{-6})$, clamped to $[0.05, 10000]$.

\paragraph{SVD of the velocity-field Jacobian (computation).}
Using \texttt{torch.func.jvp} for batched forward-mode AD: $m = 600$ random Gaussian tangent probes per centroid yield $m$ Jacobian-vector products, then SVD of the resulting $d \times m$ matrix (per token). This is a randomized-range-finder construction in the sense of \citet{halko2011finding}: with target effective rank $k^* \approx 60\text{--}120$ and probe count $m = 600$, the oversampling ratio $m/k^* \approx 5\text{--}10$ is well above the constant-factor regime where the Halko bounds give near-optimal top-$k^*$ subspace recovery with overwhelming probability ($\geq 1 - O(m^{-(m-k^*)})$ in the spectral norm). Features are clustered into $K = 50$ centroids via $k$-means on mean-pooled token features, with one SVD per centroid. At training time, each sample is assigned to its nearest centroid and receives the corresponding anisotropic noise. Total SVD cache size: $\sim\!30$~GB ($K \times N \times d \times m$). The deployed energy threshold $\tau = 0.90$ yields per-centroid $k^*$ in the regime $1 \ll k^* \ll d-1$, where the anisotropic covariance differs meaningfully from isotropic.

\section{Encoder Quality Dependence}
\label{app:encoder_quality_dependence}

We provide three cross-encoder analyses supporting the encoder-dependent anisotropic-decoder advantage of \cref{tab:e2c}. Together they identify the failure mode of domain-mismatched encoders (DINOv2-L on TCGA-BRCA) as basis instability and effective-rank degeneracy rather than DiT incompetence.

\subsection{Subspace Stability Across Centroids}
\label{app:e2a}

For each encoder (UNI, DINOv2-L) we compute the cached per-(centroid, token) SVD of the velocity-field Jacobian at the deployed time anchor and measure the principal-angle alignment of the top-$k=64$ subspace between every pair of $K=50$ centroids ($\binom{50}{2} = 1225$ pairs per encoder). The alignment metric is the squared Frobenius inner product of QR-orthonormalized top-$k$ bases, averaged per token then across tokens.

UNI yields median alignment $0.360$ (IQR $[0.330, 0.388]$, range $[0.158, 0.462]$); DINOv2-L yields $0.228$ (IQR $[0.206, 0.251]$, range $[0.131, 0.314]$). Paired bootstrap on $\Delta_\text{median} = +0.132$ has $98.75\%$ CI $[0.127, 0.137]$ excluding zero; KS test $p \approx 0$; Mann--Whitney $p \approx 0$ (alternative: UNI $>$ DINOv2-L). UNI's pathology pretraining produces consistently more stable cross-centroid bases than DINOv2-L's natural-image pretraining---exactly what the anisotropic decoder needs.

\subsection{Effective Rank Distribution Across Centroids}
\label{app:e2b}

For each encoder we compute the $\tau=0.9$ effective rank $k^*_\tau = \min\{k : \sum_{i \le k} s_i^2 / \sum_i s_i^2 \ge \tau\}$ per (centroid, token) and report the per-centroid distributions across $K=50$ centroids and $N=256$ tokens.

UNI: $k^*$ median $238$ (IQR $[214.5, 253]$, range $[178, 336]$). DINOv2-L: $k^*$ median $280$ (IQR $[268, 293.75]$, range $[1, 343]$). DINOv2-L's range includes degenerate centroids ($k^*=1$) where the Jacobian is dominated by a single direction---a pathology absent in UNI. The wider distribution and degenerate floor confirm that domain mismatch produces unstable basis dimensionality, undermining any fixed-$k^*$ anisotropic allocation.

\subsection{Diffusion Transformer vs.\ Decoder Isolation via Latent-Space Fr\'echet Distance (FD)}
\label{app:e2d}

To rule out the alternative explanation that DINOv2-DiT itself fails to generate plausible latent samples (which would invalidate the basis-stability story), we compute the latent-space Fr\'echet Distance (FD) between $1{,}000$ DiT-generated latent samples and the corresponding real test latents in each encoder's own latent space (mean-pooled tokens) --- i.e., the multivariate-Gaussian Fr\'echet distance $\mathrm{FD}(\hat{\bZ}, \bZ_\text{real}) = \|\mu_{\hat{\bZ}} - \mu_{\bZ_\text{real}}\|^2 + \mathrm{tr}(\Sigma_{\hat{\bZ}} + \Sigma_{\bZ_\text{real}} - 2(\Sigma_{\hat{\bZ}}\Sigma_{\bZ_\text{real}})^{1/2})$ on the encoder's native feature space, analogous to FID but evaluated on encoder features rather than Inception/Virchow2 features. Generation uses the deployed integrator at $K=25$ steps; the comparison is in the encoder's native latent space, isolating DiT competence from any decoder behavior.

UNI-DiT yields $\text{FD}_\text{UNI-latent} = 0.030$; DINOv2-DiT yields $\text{FD}_\text{DINOv2-latent} = 0.032$. The ratio $\text{FD}_\text{DINOv2}/\text{FD}_\text{UNI} = 1.04$ falls comfortably in the $[0.5, 3]$ band that we pre-registered as the DiT-validity range, indicating DINOv2-DiT generates distributionally-correct latents. The end-to-end gFID gap (\cref{tab:e2c}) therefore isolates to the decoder, consistent with the basis-stability mechanism above.

\section{Decoder Noise Magnitude Ablation}
\label{app:sigmaL_ablation}

We ablate the low-energy decoder noise magnitude $\sigma_L$ on TCGA-BRCA at 45K decoder training steps with all other settings fixed at the deployed configuration ($\sigma_H = 0.002$, $\tau = 0.90$, $\smax = 0.01$ bridge, UNI encoder). The deployed default $\sigma_L = 0.02$ minimizes gFID. Increasing $\sigma_L$ predictably increases rFID --- the decoder's reconstruction of clean inputs degrades because it has been trained to denoise increasingly large perturbations --- but the relationship with gFID is non-monotone. Going from $\sigma_L = 0.01$ to $0.02$ increases rFID and decreases gFID; pushing further to $\sigma_L = 0.03$ injects too much noise into the low-energy directions during decoder training, and the decoder begins absorbing these directions \emph{in addition to} the residual generation drift it was meant to absorb, which corrupts the final image and degrades gFID. Removing low-energy noise entirely ($\sigma_L = 0$) recovers the no-bridge anisotropic-decoder row of \cref{tab:ablation} (Row~2: gFID $8.27$, rFID $5.02$), worse than every nonzero entry below on gFID --- confirming both that low-energy noise injection is essential, and that the deployed magnitude sits at the optimum of the explored grid.

\begin{table}[!h]
  \caption{Decoder $\sigma_L$ ablation on TCGA-BRCA at 45K decoder steps. Bridge ($\smax{=}0.01$) and $\sigma_H{=}0.002$ held fixed.}
  \label{tab:sigmaL_ablation}
  \centering
  \footnotesize
  \setlength{\tabcolsep}{6pt}
  \begin{tabular}{@{}lcc@{}}
    \toprule
    $\sigma_L$ & rFID $\downarrow$ & gFID $\downarrow$ \\
    \midrule
    $0.01$           & \textbf{2.41}    & 7.39          \\
    $\mathbf{0.02}$ \emph{(deployed)} & 3.52 & \textbf{6.86} \\
    $0.03$           & 5.12             & 7.97          \\
    \bottomrule
  \end{tabular}
\end{table}

\clearpage
\section{Image Gallery}
\label{app:image_gallery}

\clearpage
\subsection{Reconstruction comparison on TCGA-BRCA}
\label{app:gallery_h1}

\begin{figure}[H]
  \centering
  \setlength{\tabcolsep}{1.5pt}
  \renewcommand{\arraystretch}{1.05}
  \begin{tabular}{@{}>{\centering\arraybackslash}m{0.035\linewidth}@{\hspace{2pt}}*{3}{>{\centering\arraybackslash}m{0.20\linewidth}}@{}}
     & GT 1 & GT 2 & GT 3 \\
    \rotatebox{90}{GT} &
      \includegraphics[width=\linewidth]{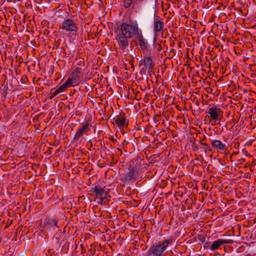} &
      \includegraphics[width=\linewidth]{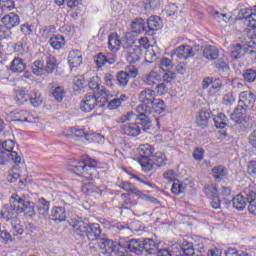} &
      \includegraphics[width=\linewidth]{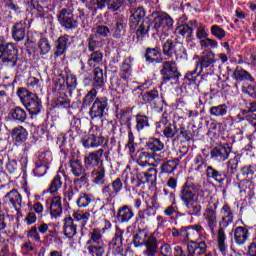} \\
    \rotatebox{90}{ZoomLDM} &
      \includegraphics[width=\linewidth]{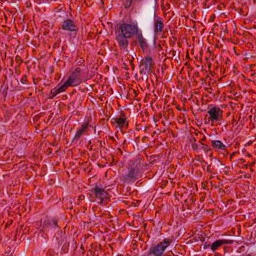} &
      \includegraphics[width=\linewidth]{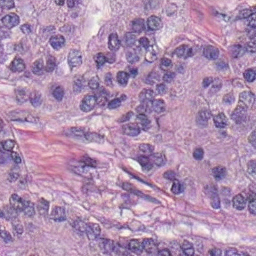} &
      \includegraphics[width=\linewidth]{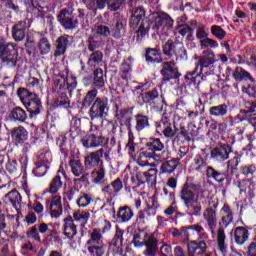} \\
    \rotatebox{90}{PixCell} &
      \includegraphics[width=\linewidth]{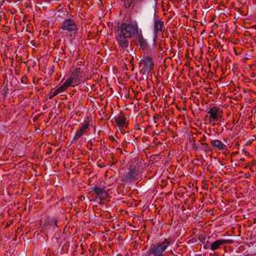} &
      \includegraphics[width=\linewidth]{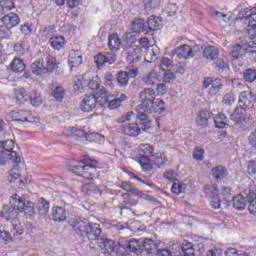} &
      \includegraphics[width=\linewidth]{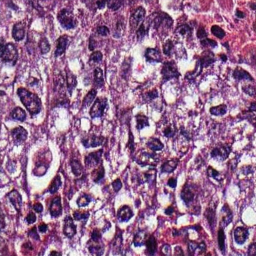} \\
    \rotatebox{90}{STREAM (Ours)} &
      \includegraphics[width=\linewidth]{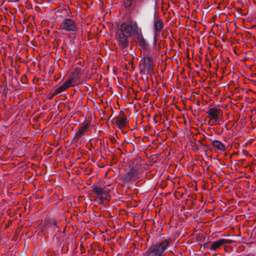} &
      \includegraphics[width=\linewidth]{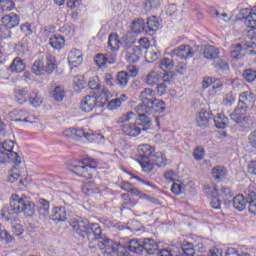} &
      \includegraphics[width=\linewidth]{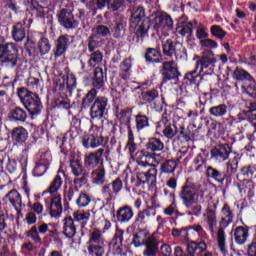} \\
    \rotatebox{90}{RAE} &
      \includegraphics[width=\linewidth]{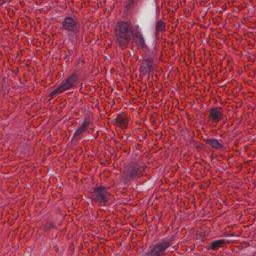} &
      \includegraphics[width=\linewidth]{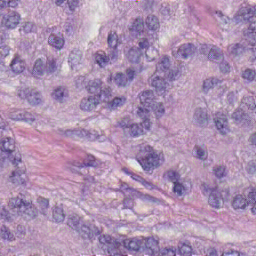} &
      \includegraphics[width=\linewidth]{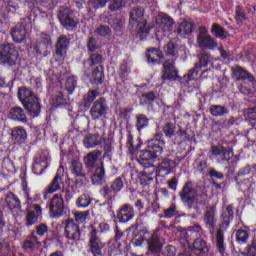} \\
    \rotatebox{90}{SVG} &
      \includegraphics[width=\linewidth]{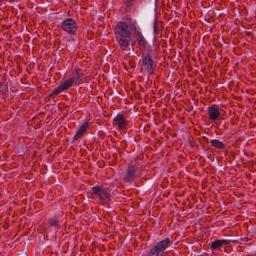} &
      \includegraphics[width=\linewidth]{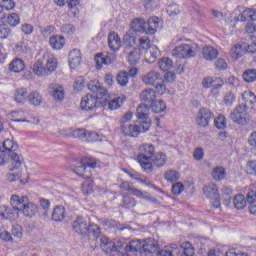} &
      \includegraphics[width=\linewidth]{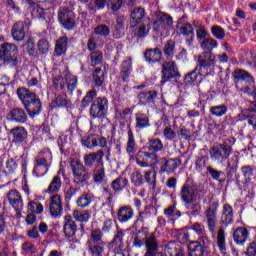} \\
  \end{tabular}
  \caption{Reconstruction comparison on TCGA-BRCA: ground truth (GT) and reconstruction outputs (used for rFID evaluation) from ZoomLDM, PixCell, STREAM (Ours), RAE, and SVG. Patches were selected by per-image LPIPS ranking on the GT--reconstruction pairs to highlight perceptual differences across models.}
  \label{fig:gallery_h1}
\end{figure}

\clearpage
\subsection{Reconstruction comparison on TCGA-COADREAD}
\label{app:gallery_h2}

\begin{figure}[H]
  \centering
  \setlength{\tabcolsep}{1.5pt}
  \renewcommand{\arraystretch}{1.05}
  \begin{tabular}{@{}>{\centering\arraybackslash}m{0.035\linewidth}@{\hspace{2pt}}*{3}{>{\centering\arraybackslash}m{0.20\linewidth}}@{}}
     & GT 1 & GT 2 & GT 3 \\
    \rotatebox{90}{GT} &
      \includegraphics[width=\linewidth]{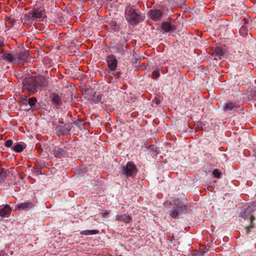} &
      \includegraphics[width=\linewidth]{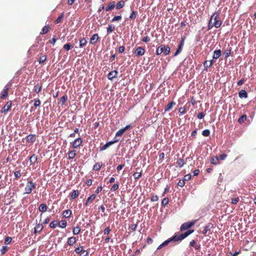} &
      \includegraphics[width=\linewidth]{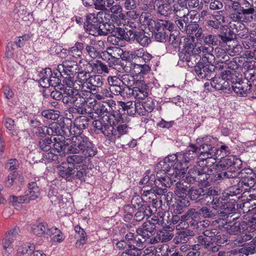} \\
    \rotatebox{90}{ZoomLDM} &
      \includegraphics[width=\linewidth]{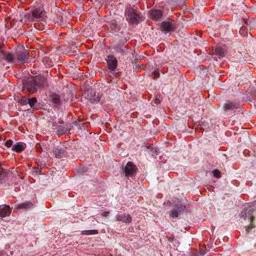} &
      \includegraphics[width=\linewidth]{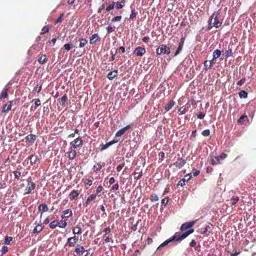} &
      \includegraphics[width=\linewidth]{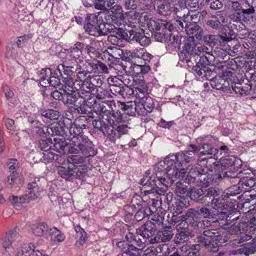} \\
    \rotatebox{90}{PixCell} &
      \includegraphics[width=\linewidth]{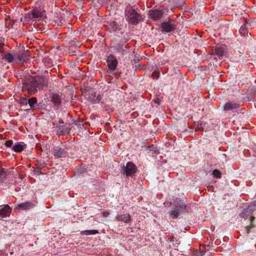} &
      \includegraphics[width=\linewidth]{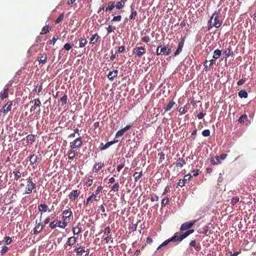} &
      \includegraphics[width=\linewidth]{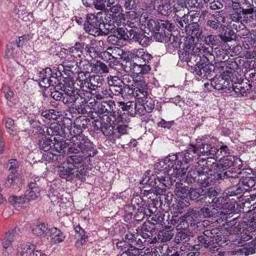} \\
    \rotatebox{90}{STREAM (Ours)} &
      \includegraphics[width=\linewidth]{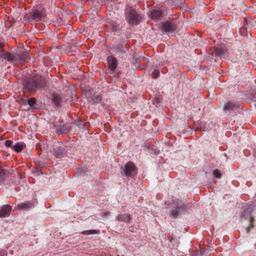} &
      \includegraphics[width=\linewidth]{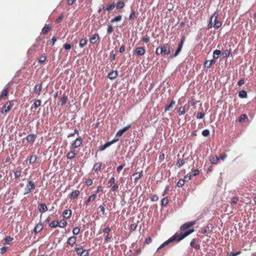} &
      \includegraphics[width=\linewidth]{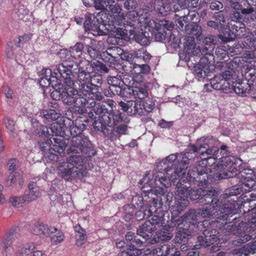} \\
    \rotatebox{90}{RAE} &
      \includegraphics[width=\linewidth]{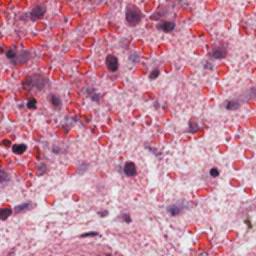} &
      \includegraphics[width=\linewidth]{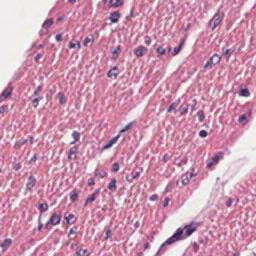} &
      \includegraphics[width=\linewidth]{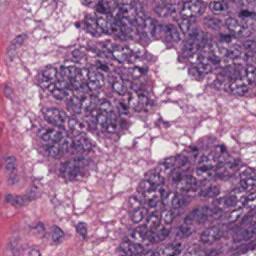} \\
    \rotatebox{90}{SVG} &
      \includegraphics[width=\linewidth]{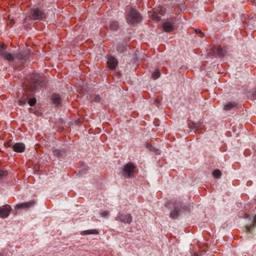} &
      \includegraphics[width=\linewidth]{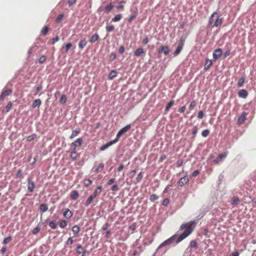} &
      \includegraphics[width=\linewidth]{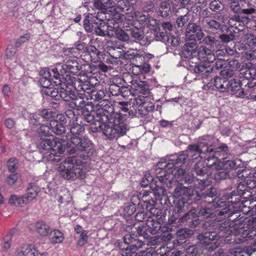} \\
  \end{tabular}
  \caption{Reconstruction comparison on TCGA-COADREAD: ground truth (GT) and reconstruction outputs (used for rFID evaluation) from ZoomLDM, PixCell, STREAM (Ours), RAE, and SVG. Patches were selected by per-image LPIPS ranking on the GT--reconstruction pairs to highlight perceptual differences across models.}
  \label{fig:gallery_h2}
\end{figure}

\clearpage
\subsection{Generation samples on TCGA-BRCA}
\label{app:gallery_h3}

\begin{figure}[H]
  \centering
  \setlength{\tabcolsep}{1.5pt}
  \renewcommand{\arraystretch}{1.05}
  \begin{tabular}{@{}>{\centering\arraybackslash}m{0.035\linewidth}@{\hspace{2pt}}*{3}{>{\centering\arraybackslash}m{0.21\linewidth}}@{}}
     & Cluster 1 & Cluster 2 & Cluster 3 \\
    \rotatebox{90}{ZoomLDM} &
      \includegraphics[width=\linewidth]{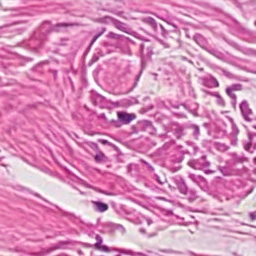} &
      \includegraphics[width=\linewidth]{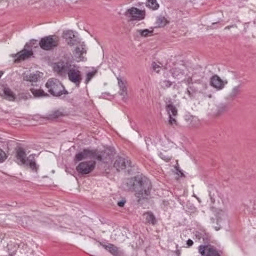} &
      \includegraphics[width=\linewidth]{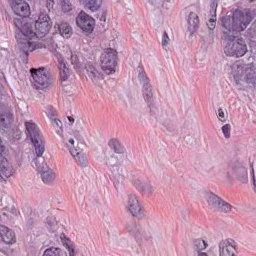} \\
    \rotatebox{90}{PixCell} &
      \includegraphics[width=\linewidth]{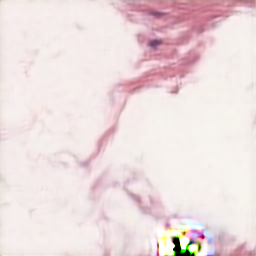} &
      \includegraphics[width=\linewidth]{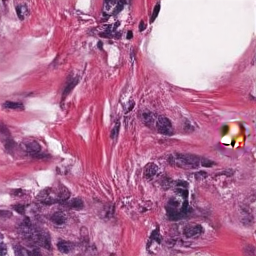} &
      \includegraphics[width=\linewidth]{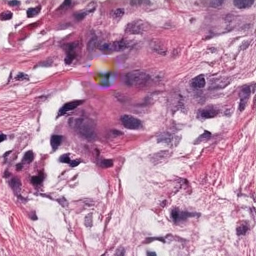} \\
    \rotatebox{90}{STREAM (Ours)} &
      \includegraphics[width=\linewidth]{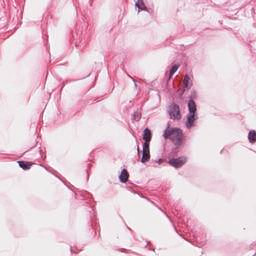} &
      \includegraphics[width=\linewidth]{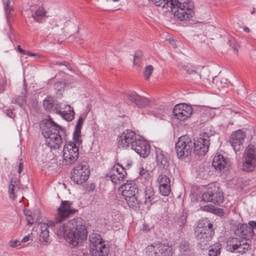} &
      \includegraphics[width=\linewidth]{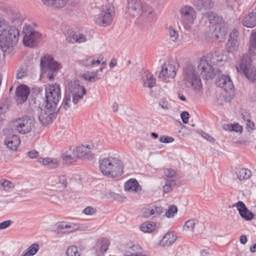} \\
    \rotatebox{90}{RAE} &
      \includegraphics[width=\linewidth]{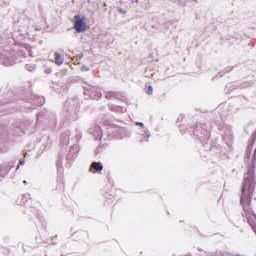} &
      \includegraphics[width=\linewidth]{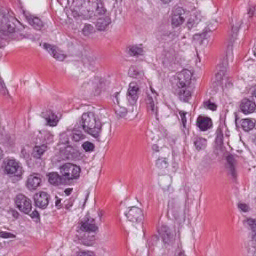} &
      \includegraphics[width=\linewidth]{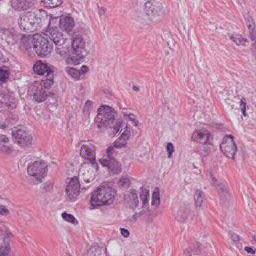} \\
    \rotatebox{90}{SVG} &
      \includegraphics[width=\linewidth]{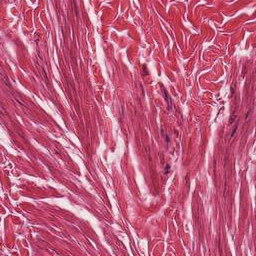} &
      \includegraphics[width=\linewidth]{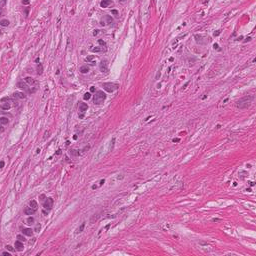} &
      \includegraphics[width=\linewidth]{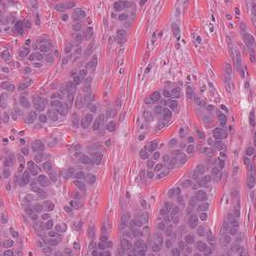} \\
  \end{tabular}
  \caption{Generation samples on TCGA-BRCA: unconditional samples (used for gFID evaluation) from ZoomLDM, PixCell, STREAM (Ours), RAE, and SVG. Each column corresponds to a representative tissue cluster obtained by K-means on Virchow2 features of the GT set; for each cluster we display the per-model sample whose Virchow2 feature is closest to the cluster centroid.}
  \label{fig:gallery_h3}
\end{figure}

\clearpage
\subsection{Generation samples on TCGA-COADREAD}
\label{app:gallery_h4}

\begin{figure}[H]
  \centering
  \setlength{\tabcolsep}{1.5pt}
  \renewcommand{\arraystretch}{1.05}
  \begin{tabular}{@{}>{\centering\arraybackslash}m{0.035\linewidth}@{\hspace{2pt}}*{3}{>{\centering\arraybackslash}m{0.21\linewidth}}@{}}
     & Cluster 1 & Cluster 2 & Cluster 3 \\
    \rotatebox{90}{ZoomLDM} &
      \includegraphics[width=\linewidth]{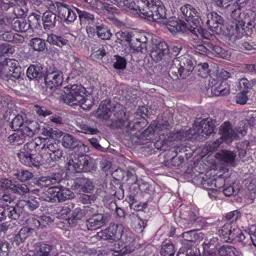} &
      \includegraphics[width=\linewidth]{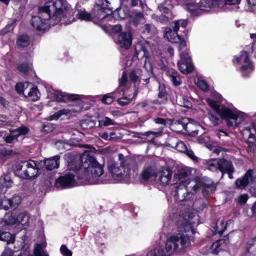} &
      \includegraphics[width=\linewidth]{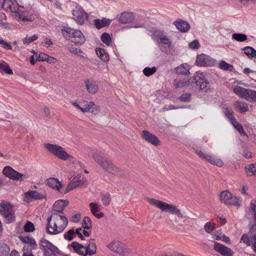} \\
    \rotatebox{90}{PixCell} &
      \includegraphics[width=\linewidth]{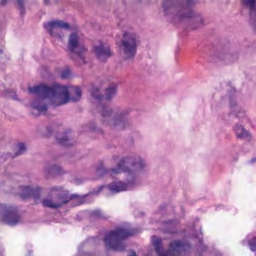} &
      \includegraphics[width=\linewidth]{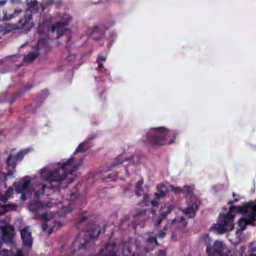} &
      \includegraphics[width=\linewidth]{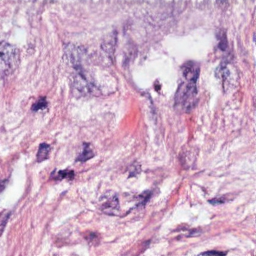} \\
    \rotatebox{90}{STREAM (Ours)} &
      \includegraphics[width=\linewidth]{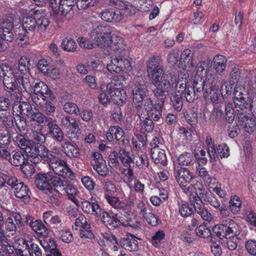} &
      \includegraphics[width=\linewidth]{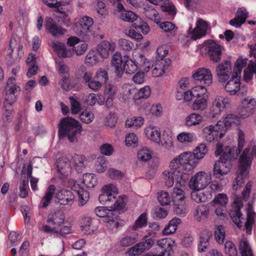} &
      \includegraphics[width=\linewidth]{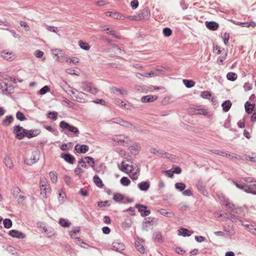} \\
    \rotatebox{90}{RAE} &
      \includegraphics[width=\linewidth]{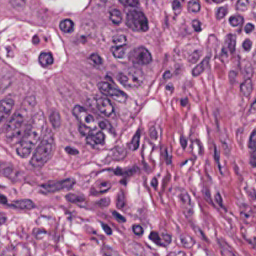} &
      \includegraphics[width=\linewidth]{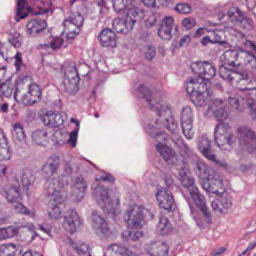} &
      \includegraphics[width=\linewidth]{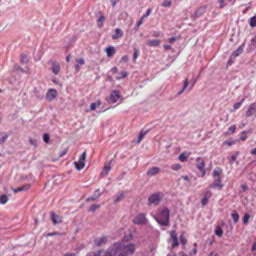} \\
    \rotatebox{90}{SVG} &
      \includegraphics[width=\linewidth]{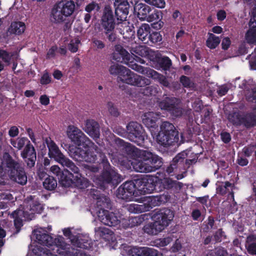} &
      \includegraphics[width=\linewidth]{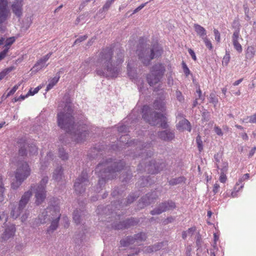} &
      \includegraphics[width=\linewidth]{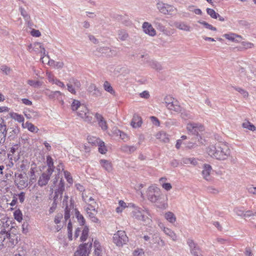} \\
  \end{tabular}
  \caption{Generation samples on TCGA-COADREAD: unconditional samples (used for gFID evaluation) from ZoomLDM, PixCell, STREAM (Ours), RAE, and SVG. Each column corresponds to a representative tissue cluster obtained by K-means on Virchow2 features of the GT set; for each cluster we display the per-model sample whose Virchow2 feature is closest to the cluster centroid.}
  \label{fig:gallery_h4}
\end{figure}


\end{document}